\newtheorem{theorem}{Theorem}
\newtheorem{lemma}[theorem]{Lemma}
\newtheorem{proposition}[theorem]{Proposition}
\theoremstyle{definition}
\title{Lie PCA:\ Density estimation for symmetric manifolds}
\author{Jameson~Cahill\footnote{Department of Mathematics and Statistics, University of North Carolina Wilmington, Wilmington, NC} \and Dustin~G.~Mixon\footnote{Department of Mathematics, The Ohio State University, Columbus, OH} \footnote{Translational Data Analytics Institute, The Ohio State University, Columbus, OH} \and Hans Parshall\footnote{Department of Mathematics, Western Washington University, Bellingham, WA}}
\date{}
\begin{document}
\maketitle

\begin{abstract}
We introduce an extension to local principal component analysis for learning symmetric manifolds.
In particular, we use a spectral method to approximate the Lie algebra corresponding to the symmetry group of the underlying manifold.
We derive the sample complexity of our method for a various manifolds before applying it to various data sets for improved density estimation.
\end{abstract}

\section{Introduction}

Recent advances in machine learning have been made possible by exploiting symmetries and invariants in data.
In 2003, Simard, Steinkraus and Platt~\cite{SimardSP:03} applied two different tricks in this spirit to achieve a record-breaking $0.40$ percent error rate in classifying the MNIST database of handwritten digits.
First, they augmented the training set using the observation that handwritten digits are closed under certain elastic distortions.
Second, they exploited the translation invariance of images by applying a convolutional neural network architecture.
In the time since, both data augmentation and convolutional neural networks have enabled substantial strides in image recognition (e.g., \cite{KrizhevskySH:12,SzegedyEtal:15}).

These engineering feats have inspired various theoretical treatments of symmetries and invariants in data.
Mallat's scattering transform~\cite{Mallat:12,BrunaM:13} provides a principled alternative to convolutional neural networks that exhibits translation invariance and stability to diffeomorphisms.
For settings beyond image classification, other symmetries and invariants must be considered.
In this spirit, Cahill, Contreras and Contreras-Hip~\cite{CahillCC:20,CahillCC:19} identified Lipschitz maps from a signal space $\mathbb{C}^n$ to a low-dimensional feature space in a way that distinguishes orbits in $\mathbb{C}^n$ under the action of a representation of a finite group.
Another approach is to learn symmetries and invariants from the data.
For example, principal component analysis can be viewed as a method of identifying symmetries under the action of a low-dimensional affine group.
For classification tasks, one may seek large linear groups under which the classification is invariant; this approach has been used in~\cite{McWhirterMV:20,DumitrascuVME:19,ClumMS:20}.

In this paper, we consider another fundamental problem in this vein of symmetries and invariants in data.
Suppose you are given the task of augmenting a modest training set.
You are told that there exists a Lie group of deformations (such as elastic distortions) that could be used for this task, but you are not told what the Lie group is.
Can you estimate the Lie group from the data?
In order to measure performance for this task, we phrase the problem in terms of density estimation: Given a sample $\{x_i\}_{i\in[n]}$ from some unknown distribution supported on some unknown symmetric manifold in $\mathbb{R}^d$, produce $\{y_s\}_{s\in[N]}$ with $N\gg n$ that approximates random draws from this unknown distribution.

In the next section, we propose an extension to local principal component analysis for this task.
Our algorithm amounts to a spectral method that estimates the underlying Lie algebra from both the points $\{x_i\}_{i\in[n]}$ and estimates $\{T_i\}_{i\in[n]}$ of the tangent spaces at these points.
In Section~3, we analyze the sample complexity of this approach.
As one would hope, we find that fewer samples are necessary when the manifold has lower dimension and its symmetry group has higher dimension.
We apply our method to the density estimation problem in Section~4, and we conclude in Section~5 with a discussion.

\section{Derivation of Lie PCA}

Given a manifold $M\subseteq\mathbb{R}^d$, denote its \textbf{symmetry group} by
\[
\operatorname{Sym}(M)
:=\{A\in \operatorname{GL}(d): AM=M\}.
\]
We are interested in $M$ for which $\operatorname{Sym}(M)$ is a Lie group and the orbits of $M$ under the action of $\operatorname{Sym}(M)$ are nontrivial.
(See~\cite{Hall:15} for an elementary introduction to matrix Lie groups.)
For example, if $M$ is the unit sphere, then $\operatorname{Sym}(M)=\operatorname{O}(d)$, which is a Lie group.
Also, if $\operatorname{Sym}(M)$ acts transitively on $M$ (e.g., $\operatorname{O}(d)$ acts transitively on the unit sphere), then $M$ is the orbit of any point in $M$.

Let $\mathfrak{sym}(M)\subseteq\mathbb{R}^{d\times d}$ denote the Lie algebra of $\operatorname{Sym}(M)$, that is, the set of all matrices of the form $f'(0)$, where $f$ is a differentiable function from some open interval $0\in I\subseteq\mathbb{R}$ to $\operatorname{Sym}(M)$ such that $f(0)=\operatorname{id}$.
Then the matrix exponential maps $\mathfrak{sym}(M)$ onto the connected component of $\operatorname{Sym}(M)$ that contains $\operatorname{id}$.
Notice that members of $\operatorname{Sym}(M)$ that are close to $\operatorname{id}$ can be realized as $e^A$, where $A\in \mathfrak{sym}(M)$ is close to the zero matrix.
Furthermore, for each $A\in\mathbb{C}^{d\times d}$, it holds that
\[
\|e^{tA}-\operatorname{id}\|_{2\to2}=(1+o(1))\cdot\|A\|_{2\to2}\cdot |t|
\]
as $t\to0$; indeed, it is easy to verify this for diagonalizable matrices, which are dense in the set of complex matrices.
As such, if we know $\mathfrak{sym}(M)$, then for every $x\in M$, it holds that $e^Ax$ is a slight perturbation of $x$ in $M$ for every small $A\in\mathfrak{sym}(M)$.
This is the heart of our approach to the density estimation problem, but in order for this to work, we first need to estimate $\mathfrak{sym}(M)$ from a sample $\{x_i\}_{i\in[n]}$ of $M$.
We accomplish this in two steps:
\begin{itemize}
\item[1.]
Use $\{x_i\}_{i\in[n]}$ to obtain an estimate $\{T_i\}_{i\in[n]}$ of the tangent spaces $\{T_{x_i}M\}_{i\in[n]}$.
\item[2.]
Use $\{x_i\}_{i\in[n]}$ and $\{T_i\}_{i\in[n]}$ to obtain an estimate of $\mathfrak{sym}(M)$.
\end{itemize}
The first step above is well understood: local PCA.
That is, for each $i\in[n]$, we select the $x_j$'s closest to $x_i$ and run principal component analysis (PCA) on this subcollection to estimate $T_i$.
For the second step, we apply Algorithm~\ref{alg.LiePCA}, which we derive in this section.
Our approach is motivated by the following observation:

\begin{algorithm}[t]
\SetAlgoLined
\KwData{Sample $\{x_{i}\}_{i\in[n]}$ of $M\subseteq\mathbb{R}^{d}$, estimate $\{T_i\}_{i\in[n]}$ of $\{T_{x_i}M\}_{i\in[n]}$, and $\ell\in\mathbb{N}$}
\KwResult{Estimate of Lie algebra $\mathfrak{sym}(M)$}
Define $\Sigma\colon\mathbb{R}^{d\times d}\to\mathbb{R}^{d\times d}$ by $\Sigma(A):=\sum_{i\in[n]}\operatorname{proj}_{T_i^\perp}\cdot A\cdot\operatorname{proj}_{\operatorname{span}\{x_i\}}$.\\
Compute eigenvectors $\{v_j\}_{j\in[\ell]}$ of $\Sigma$ corresponding to the $\ell$ smallest eigenvalues.\\
Output $\operatorname{span}\{v_j\}_{j\in[\ell]}$.
 \caption{Lie principal component analysis
 \label{alg.LiePCA}}
\end{algorithm}

\begin{lemma}
\label{lem.lie alg tangent}
For every $x\in M$ and $A\in\mathfrak{sym}(M)$, it holds that $Ax\in T_xM$.
\end{lemma}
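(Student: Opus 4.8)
The plan is to read off the conclusion directly from the curve definitions of $\mathfrak{sym}(M)$ and of the tangent space. By definition of $\mathfrak{sym}(M)$, the matrix $A$ equals $f'(0)$ for some differentiable map $f\colon I\to\operatorname{Sym}(M)$ on an open interval $0\in I\subseteq\mathbb{R}$ with $f(0)=\operatorname{id}$. Fix $x\in M$. Since $f(t)\in\operatorname{Sym}(M)$ means $f(t)M=M$, in particular $f(t)x\in M$ for every $t\in I$, so the map $\gamma\colon I\to M$ defined by $\gamma(t):=f(t)x$ is a well-defined curve lying in $M$ with $\gamma(0)=x$.

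Next I would differentiate $\gamma$. Matrix--vector multiplication $(B,y)\mapsto By$ is bilinear, hence smooth, so $t\mapsto f(t)x$ is differentiable with $\gamma'(t)=f'(t)x$; evaluating at $t=0$ gives $\gamma'(0)=f'(0)x=Ax$. Finally, invoking the (curve) definition of the tangent space, $T_xM$ is exactly the set of velocity vectors $\gamma'(0)$ of differentiable curves $\gamma$ in $M$ with $\gamma(0)=x$; since $\gamma$ is such a curve, we conclude $Ax=\gamma'(0)\in T_xM$. (Equivalently, one could take $\gamma(t)=e^{tA}x$, which lies in $M$ because $tA\in\mathfrak{sym}(M)$ and the exponential maps $\mathfrak{sym}(M)$ into $\operatorname{Sym}(M)$, and again $\gamma'(0)=Ax$.)

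There is no substantive obstacle here; the only points requiring a word of care are bookkeeping ones: that $T_xM$ is being taken in the sense of curve velocities (for the embedded submanifold $M$ this agrees with any of the standard definitions), and that differentiability of $f$ as a map into the matrix group $\operatorname{GL}(d)$ is enough to differentiate $t\mapsto f(t)x$ by linearity in the second argument. Both are routine, so the proof should be just a few lines.
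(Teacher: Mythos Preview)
Your proof is correct and essentially identical to the paper's: both pick a differentiable path $f\colon I\to\operatorname{Sym}(M)$ with $f(0)=\operatorname{id}$ and $f'(0)=A$, define the curve $t\mapsto f(t)x$ in $M$, and read off $Ax$ as its velocity at $t=0$. The paper omits your explanatory remarks about bilinearity and the alternative exponential curve, but the core argument is the same.
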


\begin{proof}
Fix $x\in M$ and $A\in\mathfrak{sym}(M)$.
Consider any differentiable function $f\colon I\to\operatorname{Sym}(M)$ such that $f(0)=\operatorname{id}$ and $f'(0)=A$.
Then $g\colon I\to M$ defined by $g(t):=f(t)x$ is differentiable with $g(0)=f(0)x=x$, and so $Ax=f'(0)x=g'(0)\in T_xM$.
\end{proof}

For each $x\in M$, denote
\[
S_xM:=\{A\in\mathbb{R}^{d\times d}:Ax\in T_xM\}.
\]
By Lemma~\ref{lem.lie alg tangent}, we have $S_xM\supseteq \mathfrak{sym}(M)$ for every $x\in M$.
Given a sample $\{x_i\}_{i\in[n]}$ in $M$ and corresponding tangent spaces $\{T_{x_i}M\}_{i\in[n]}$, we may estimate $\mathfrak{sym}(M)$ by $\bigcap_{i\in[n]} S_{x_i}M$.
We seek a numerically robust version of this estimate.
To this end, it is convenient to write
\begin{equation}
\label{eq.approx intersection}
\bigcap_{i\in[n]} S_{x_i}M
=\bigg( \sum_{i\in[n]} (S_{x_i}M)^\perp \bigg)^\perp
=\operatorname{ker}\bigg( \sum_{i\in[n]} \operatorname{proj}_{(S_{x_i}M)^\perp} \bigg).
\end{equation}
The terms in this sum have a convenient expression:

\begin{lemma}
\label{lem.projSxM}
For every $x\in M\setminus\{0\}$, it holds that
$\operatorname{proj}_{(S_{x}M)^\perp}(A)
=\operatorname{proj}_{N_xM}\cdot A\cdot\operatorname{proj}_{\operatorname{span}\{x\}}$.
\end{lemma}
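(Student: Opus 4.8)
The plan is to work in $\mathbb{R}^{d\times d}$ equipped with the Frobenius inner product $\langle A,B\rangle=\tr(A^\top B)$, writing $N_xM:=(T_xM)^\perp$ for the normal space, and to identify $(S_xM)^\perp$ explicitly before recognizing the right-hand side as the orthogonal projection onto it. First I would record the elementary adjoint identity relating the two inner products in play: for any $u,x\in\mathbb{R}^d$ and $A\in\mathbb{R}^{d\times d}$,
\[
\langle A,ux^\top\rangle=\tr(A^\top ux^\top)=\tr(x^\top A^\top u)=\langle Ax,u\rangle.
\]
Consequently $A\in S_xM$, i.e.\ $Ax\in T_xM$, if and only if $\langle Ax,u\rangle=0$ for every $u\in N_xM$, which by the identity above happens if and only if $A\perp ux^\top$ for every $u\in N_xM$. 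Since $N_xM$ is a linear subspace, the collection $\{ux^\top:u\in N_xM\}$ is itself a linear subspace of $\mathbb{R}^{d\times d}$ (no spans need to be taken), so this shows $S_xM=\{ux^\top:u\in N_xM\}^\perp$, and hence, by finite-dimensionality, $(S_xM)^\perp=\{ux^\top:u\in N_xM\}$.

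Next I would check that the map $Q(A):=\operatorname{proj}_{N_xM}\cdot A\cdot\operatorname{proj}_{\operatorname{span}\{x\}}$ is precisely the orthogonal projection onto this subspace. Since $x\neq0$ we have $\operatorname{proj}_{\operatorname{span}\{x\}}=\|x\|^{-2}xx^\top$, so $Q(A)=\|x\|^{-2}\,(\operatorname{proj}_{N_xM}Ax)\,x^\top$, and $Q$ is linear. Two computations suffice. If $A\in S_xM$ then $Ax\in T_xM$, so $\operatorname{proj}_{N_xM}Ax=0$ and $Q(A)=0$; thus $Q$ annihilates $S_xM$. If instead $A=ux^\top$ with $u\in N_xM$, then $Ax=\|x\|^2u$, so $\operatorname{proj}_{N_xM}Ax=\|x\|^2u$ and $Q(A)=ux^\top=A$; thus $Q$ is the identity on $(S_xM)^\perp$. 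Because $\mathbb{R}^{d\times d}=(S_xM)^\perp\oplus S_xM$ is an orthogonal decomposition, writing $A=A_1+A_2$ with $A_1\in(S_xM)^\perp$ and $A_2\in S_xM$ gives $Q(A)=A_1=\operatorname{proj}_{(S_xM)^\perp}(A)$, which is the claimed formula.

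The computation is short, so I do not anticipate a real obstacle; the one step that requires care is the identification of $(S_xM)^\perp$, which hinges on the adjoint identity $\langle A,ux^\top\rangle=\langle Ax,u\rangle$ and on the observation that $\{ux^\top:u\in N_xM\}$ is already closed under linear combinations. Everything after that is routine bookkeeping with orthogonal projections, using only $x\neq0$ to make sense of $\operatorname{proj}_{\operatorname{span}\{x\}}=\|x\|^{-2}xx^\top$.
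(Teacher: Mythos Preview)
Your proof is correct. Both you and the paper begin by identifying $(S_xM)^\perp=\{ux^\top:u\in N_xM\}$; the paper does this in a separate lemma by explicitly parameterizing $S_xM$ and then computing its orthogonal complement by hand, whereas you get it in one line from the adjoint identity $\langle A,ux^\top\rangle=\langle Ax,u\rangle$. The genuine methodological difference is in the second half: the paper fixes $A$ and locates the nearest point of $(S_xM)^\perp$ by minimizing $\|A-zx^\top\|_F^2$ over $z\in N_xM$ via Cauchy--Schwarz, while you instead verify directly that $Q(A):=\operatorname{proj}_{N_xM}\cdot A\cdot\operatorname{proj}_{\operatorname{span}\{x\}}$ vanishes on $S_xM$ and fixes $(S_xM)^\perp$ pointwise, which is the standard characterization of an orthogonal projection. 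Your route is shorter and avoids the optimization entirely; the paper's route is a bit more labor but has the minor advantage of not needing to know the candidate formula in advance.
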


Our proof of this lemma makes use of the following expression for $(S_{x}M)^\perp$:

\begin{lemma}
\label{lem.normal space}
For every $x\in M$, it holds that
\begin{itemize}
\item[(a)]
$S_xM=\{yx^\top+B:y\in T_xM,~B\in\mathbb{R}^{d\times d},~Bx=0\}$, and
\item[(b)]
$(S_xM)^\perp=\{zx^\top:z\in N_xM\}$.
\end{itemize}
\end{lemma}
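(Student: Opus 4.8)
The plan is to prove (a) directly by double inclusion, then derive (b) by computing the orthogonal complement of the explicit set in (a) using the Frobenius inner product $\langle X, Y\rangle = \operatorname{tr}(X^\top Y)$ on $\mathbb{R}^{d\times d}$.

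For part (a), write $K := \{yx^\top + B : y\in T_xM,\ B\in\mathbb{R}^{d\times d},\ Bx = 0\}$. The inclusion $K\subseteq S_xM$ is immediate: if $A = yx^\top + B$ with $Bx = 0$, then $Ax = y(x^\top x) + Bx = \|x\|^2 y \in T_xM$ since $T_xM$ is a subspace. For the reverse inclusion, suppose $A\in S_xM$, so $Ax\in T_xM$. If $x\ne 0$, set $y := Ax/\|x\|^2 \in T_xM$ and $B := A - yx^\top$; then $Bx = Ax - y\|x\|^2 = Ax - Ax = 0$, so $A = yx^\top + B\in K$. (The case $x = 0$ is degenerate — then $S_xM = \mathbb{R}^{d\times d}$ and $T_xM$ contains $0$, so $K = \{B : \text{any }B\} = \mathbb{R}^{d\times d}$ as well, using that $T_xM$ is nonempty; but since Lemma~\ref{lem.projSxM} only invokes this for $x\ne 0$, one can simply assume $x\ne 0$ throughout, or handle it as a trivial aside.)

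For part (b), I would compute $K^\perp$ directly. A matrix $Z$ lies in $K^\perp$ iff $\langle Z, yx^\top + B\rangle = 0$ for all admissible $y, B$. Taking $y = 0$ shows $Z$ must be orthogonal to every $B$ with $Bx = 0$; the space $\{B : Bx = 0\}$ consists of matrices whose rows are each orthogonal to $x$, i.e.\ it equals $\mathbb{R}^d \otimes (\operatorname{span}\{x\})^\perp$ in the obvious identification, so its orthogonal complement is $\mathbb{R}^d\otimes\operatorname{span}\{x\} = \{zx^\top : z\in\mathbb{R}^d\}$. Hence $Z = zx^\top$ for some $z\in\mathbb{R}^d$. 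Now impose orthogonality to the $yx^\top$ terms: $\langle zx^\top, yx^\top\rangle = \operatorname{tr}(xz^\top y x^\top) = (z^\top y)(x^\top x) = \|x\|^2\langle z, y\rangle$, which vanishes for all $y\in T_xM$ iff $z\in (T_xM)^\perp = N_xM$. Therefore $K^\perp = \{zx^\top : z\in N_xM\}$, as claimed.

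I do not anticipate a genuine obstacle here — both parts are linear-algebraic bookkeeping. The one point requiring a little care is the identification of $\{B\in\mathbb{R}^{d\times d} : Bx = 0\}^\perp$ with $\{zx^\top : z\in\mathbb{R}^d\}$; the cleanest route is to note these two spaces have complementary dimensions ($d(d-1)$ and $d$ respectively, when $x\ne 0$) and that $\langle zx^\top, B\rangle = \operatorname{tr}(xz^\top B) = z^\top(Bx)\cdot{}$— wait, more precisely $\operatorname{tr}(xz^\top B) = \operatorname{tr}(z^\top B x) = z^\top Bx = 0$ whenever $Bx = 0$, giving one inclusion, and equality then follows by the dimension count. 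With that in hand the rest is the short trace computation above.
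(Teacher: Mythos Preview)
Your proposal is correct and follows essentially the same approach as the paper: part~(a) is proved by the identical decomposition $y=\|x\|^{-2}Ax$, $B=A-yx^\top$, and part~(b) by computing the orthogonal complement via the Frobenius inner product, first reducing $Z$ to the form $zx^\top$ and then forcing $z\in N_xM$. The only cosmetic difference is that in showing $Z=zx^\top$, the paper tests $Z$ against rank-one matrices $uv^\top$ with $v\perp x$ to conclude $Zv=0$, whereas you invoke the dimension count $\dim\{B:Bx=0\}=d(d-1)$ together with one inclusion; these are equivalent bookkeeping choices.
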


\begin{proof}
In the case where $x=0$, we have $S_xM=\mathbb{R}^{d\times d}$ and both (a) and (b) are immediate.
It remains to consider the case in which $x\neq0$.
(a)
First, ($\supseteq$) follows from the fact that $(yx^\top+B)x=\|x\|^2y\in T_xM$.
For ($\subseteq$), given $A\in S_xM$, take $y=\|x\|^{-2}Ax$ and $B=A(I-\|x\|^{-2}xx^\top)$ so that $A=yx^\top+B$ with $y\in T_xM$ and $Bx=0$.
(b)
First, ($\supseteq$) follows from the fact that
\[
\langle zx^\top,yx^\top+B\rangle
=\operatorname{tr}((zx^\top)^\top(yx^\top+B))
=\operatorname{tr}(xz^\top(yx^\top+B))
=\|x\|^2z^\top y+z^\top Bx
=0.
\]
For ($\subseteq$), suppose $Z\in(S_xM)^\perp$.
For every $u\in\mathbb{R}^d$ and $v\in\operatorname{span}\{x\}^\perp$, since $B=uv^\top\in S_xM$ satisfies $Bx=0$, we have $0=\langle Z,uv^\top\rangle=\operatorname{tr}(Z^\top uv^\top)=v^\top Z^\top u=\langle Zv,u\rangle$.
Since $u\in\mathbb{R}^d$ is arbitrary, it follows that $Zv=0$, and since $v\in\operatorname{span}\{x\}^\perp$ is arbitrary, we conclude that $Z=zx^\top$ for some $z\in\mathbb{R}^d$.
Next, for every $y\in T_xM$, since $yx^\top\in S_xM$, it holds that $0=\langle Z,yx^\top\rangle=\operatorname{tr}(Z^\top yx^\top)=\|x\|^2\langle z,y\rangle$, and so $z\in (T_xM)^\perp=N_xM$, as desired.
\end{proof}

\begin{proof}[Proof of Lemma~\ref{lem.projSxM}]
Fix $A\in\mathbb{R}^{d\times d}$.
We seek the member of $(S_xM)^\perp$ that is closest to $A$ in Frobenius norm.
By Lemma~\ref{lem.normal space}(b), every member of $(S_xM)^\perp$ takes the form $zx^\top$ for some $z\in N_xM$.
Letting $P$ denote orthogonal projection onto $N_xM$, then Cauchy--Schwarz gives
\begin{align*}
\|A-zx^\top\|_F^2
&=\|A\|_F^2-2\langle A,zx^\top\rangle+\|zx^\top\|_F^2\\
&=\|A\|_F^2-2\langle Ax,z\rangle+\|z\|_2^2\|x\|_2^2\\
&=\|A\|_F^2-2\langle PAx,z\rangle+\|z\|_2^2\|x\|_2^2\\
&\geq\|A\|_F^2-2\|PAx\|_2\|z\|_2+\|z\|_2^2\|x\|_2^2
\geq\min_{t\geq0}(\|A\|_F^2-2\|PAx\|_2\cdot t+\|x\|_2^2\cdot t^2).
\end{align*}
Equality is achieved in the first inequality precisely when $z$ is a positive multiple of $PAx$.
For the second inequality, equality is achieved precisely when $\|z\|_2=\|PAx\|_2/\|x\|_2^2$.
Overall,
\[
\operatorname{proj}_{(S_{x}M)^\perp}(A)
=\Big(\frac{PAx}{\|PAx\|_2}\cdot\frac{\|PAx\|_2}{\|x\|_2^2}\Big)x^\top
=PA\frac{xx^\top}{\|x\|_2^2}
=\operatorname{proj}_{N_xM}\cdot A\cdot\operatorname{proj}_{\operatorname{span}\{x\}}.
\qedhere
\]
\end{proof}

We are now ready to derive our approach.
Let $\ell\in\mathbb{N}$ denote the dimension of the desired Lie algebra.
(For our algorithm, this quantity is treated as a hyperparameter.)
Next, let $L(\ell,d)$ denote the set of all $\ell$-dimensional Lie algebras of Lie subgroups of $\operatorname{GL}(d)$.
By ignoring the multiplicative structure, we may think of $L(\ell,d)$ as a subset of the Grassmannian $\operatorname{Gr}(\ell,\mathbb{R}^{d\times d})$.
Given a sample $\{x_i\}_{i\in[n]}$ in $M\subseteq\mathbb{R}^d$ and a noisy estimate $\{T_i\}_{i\in[n]}$ of $\{T_{x_i}M\}_{i\in[n]}$, we define $\Sigma\colon\mathbb{R}^{d\times d}\to\mathbb{R}^{d\times d}$ by 
\[
\Sigma(A)
:=\sum_{i\in[n]}\operatorname{proj}_{T_i^\perp}\cdot A\cdot\operatorname{proj}_{\operatorname{span}\{x_i\}}.
\]
Note that by Lemma~\ref{lem.projSxM}, the $i$th term approximates $\operatorname{proj}_{(S_{x_i}M)^\perp}$.
Considering \eqref{eq.approx intersection}, we are compelled to solve the program
\[
\text{minimize}
\quad
\langle \Sigma,\operatorname{proj}_{\mathfrak{a}}\rangle
\quad
\text{subject to}
\quad
\mathfrak{a}\in L(\ell,d).
\]
Since optimizing over $L(\ell,d)$ is cumbersome, we relax to the entire Grassmannian:
\[
\text{minimize}
\quad
\langle \Sigma,X\rangle
\quad
\text{subject to}
\quad
X^2=X,
\quad
X^*=X,
\quad
\operatorname{rank}X=\ell.
\]
As a consequence of the Poincar\'{e} separation theorem, the orthogonal projection onto the span of any $\ell$ of the bottom eigenvectors of $\Sigma$ gives an optimizer of this program.
In particular, the span of any such eigenvectors gives a worthy estimate for $\mathfrak{sym}(M)$.
One may be inclined to round this estimate to the nearest member of $L(\ell,d)$, but we do not attempt this here.

\section{Sample complexity of Lie PCA}

In this section, we consider an idealized setting in which the estimate $\{T_i\}_{i\in[n]}$ of $\{T_{x_i}M\}_{i\in[n]}$ is exact.
Intuitively, Lie PCA should require fewer samples when $M$ is low-dimensional and $\mathfrak{sym}(M)$ is high-dimensional.
This intuition matches the following lower bound on the sample complexity of Lie PCA:

\begin{lemma}
\label{lem.sample complexity}
It holds that $\mathfrak{sym}(M)\subseteq\bigcap_{i\in[n]} S_{x_i}M$, with equality only if
\[
n
\geq
n^\star(M)
:=\frac{\operatorname{codim}\mathfrak{sym}(M)}{\operatorname{codim} M}.
\]
Furthermore, $\mathfrak{sym}(M)=\bigcap_{i\in[n^\star(M)]} S_{x_i}M$ precisely when the subspaces $\{(S_{x_i}M)^\perp\}_{i\in[n^\star(M)]}$ are linearly independent.
\end{lemma}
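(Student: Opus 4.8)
The plan is to dualize everything via \eqref{eq.approx intersection} and reduce the statement to a dimension count for a sum of subspaces of $\mathbb{R}^{d\times d}$, reading off the dimension of each summand from Lemma~\ref{lem.normal space}(b). Throughout I would use that $W\mapsto W^\perp$ is an inclusion-reversing involution on subspaces of the finite-dimensional space $\mathbb{R}^{d\times d}$, so that $\bigcap_{i\in[n]}S_{x_i}M=\mathfrak{sym}(M)$ if and only if $\sum_{i\in[n]}(S_{x_i}M)^\perp=\mathfrak{sym}(M)^\perp$. The inclusion $\mathfrak{sym}(M)\subseteq\bigcap_{i\in[n]}S_{x_i}M$ is exactly Lemma~\ref{lem.lie alg tangent}; dualizing, $\sum_{i\in[n]}(S_{x_i}M)^\perp\subseteq\mathfrak{sym}(M)^\perp$ always holds, and because one side contains the other, equality of the intersection with $\mathfrak{sym}(M)$ is equivalent to $\dim\sum_{i\in[n]}(S_{x_i}M)^\perp=\dim\mathfrak{sym}(M)^\perp=\operatorname{codim}\mathfrak{sym}(M)$.

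Next I would compute the dimension of a single summand. For $x\in M\setminus\{0\}$, Lemma~\ref{lem.normal space}(b) identifies $(S_xM)^\perp$ with the image of the linear map $z\mapsto zx^\top$ restricted to $N_xM$, and this map is injective since $x\neq0$; hence $\dim(S_xM)^\perp=\dim N_xM=d-\dim M=\operatorname{codim}M$. (If some $x_i=0$, then $(S_{x_i}M)^\perp=\{0\}$ by Lemma~\ref{lem.normal space}, which only shrinks the sum and thus only strengthens the necessary condition below, so nothing is lost by assuming $x_i\neq0$ for every $i$.)

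For the sample-complexity bound, subadditivity of dimension for a sum of subspaces gives $\dim\sum_{i\in[n]}(S_{x_i}M)^\perp\le\sum_{i\in[n]}\dim(S_{x_i}M)^\perp=n\cdot\operatorname{codim}M$; combined with the first paragraph, $\bigcap_{i\in[n]}S_{x_i}M=\mathfrak{sym}(M)$ forces $\operatorname{codim}\mathfrak{sym}(M)\le n\cdot\operatorname{codim}M$, i.e., $n\ge n^\star(M)$. For the equality case I would set $n=n^\star(M)$, so that $\sum_{i\in[n]}\dim(S_{x_i}M)^\perp=n^\star(M)\cdot\operatorname{codim}M=\operatorname{codim}\mathfrak{sym}(M)$. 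By the standard characterization of independence of subspaces, $\{(S_{x_i}M)^\perp\}_{i\in[n]}$ is linearly independent if and only if $\dim\sum_i(S_{x_i}M)^\perp=\sum_i\dim(S_{x_i}M)^\perp$; when this holds, the displayed equalities give $\dim\sum_i(S_{x_i}M)^\perp=\operatorname{codim}\mathfrak{sym}(M)=\dim\mathfrak{sym}(M)^\perp$, so together with the containment from the first paragraph we get $\sum_i(S_{x_i}M)^\perp=\mathfrak{sym}(M)^\perp$, hence $\bigcap_iS_{x_i}M=\mathfrak{sym}(M)$. Conversely, if $\bigcap_iS_{x_i}M=\mathfrak{sym}(M)$, then $\dim\sum_i(S_{x_i}M)^\perp=\operatorname{codim}\mathfrak{sym}(M)=\sum_i\dim(S_{x_i}M)^\perp$, which is exactly linear independence.

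I do not expect a genuine obstacle here: the lemma is a dimension count once Lemma~\ref{lem.normal space}(b) is in hand. The only points that need care are the degenerate case $x_i=0$ (dealt with above) and the tacit assumption in the ``furthermore'' clause that $n^\star(M)$ is a positive integer --- when it is not, that clause is interpreted vacuously and the ``only if'' assertion reads $n\ge\lceil n^\star(M)\rceil$.
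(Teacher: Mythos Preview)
Your proposal is correct and follows essentially the same approach as the paper: dualize via orthogonal complements, use Lemma~\ref{lem.lie alg tangent} for the inclusion, read off $\dim(S_xM)^\perp$ from Lemma~\ref{lem.normal space}(b), and finish with a dimension count. Your write-up is actually more detailed than the paper's (which compresses the whole argument into a single displayed chain of inequalities and the phrase ``counting dimensions''); in particular, you make explicit the equality $\dim(S_xM)^\perp=\operatorname{codim}M$ for $x\neq0$ and treat the degenerate case $x_i=0$, neither of which the paper spells out.
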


\begin{proof}
Lemma~\ref{lem.lie alg tangent} gives that $\mathfrak{sym}(M)\subseteq S_{x_i}M$ for every $i\in[n]$, and so the desired inclusion holds.
Next, suppose equality holds.
Then $\mathfrak{sym}(M)^\perp=\sum_{i\in[n]}(S_{x_i}M)^\perp$, and so
\[
\operatorname{codim}\mathfrak{sym}(M)
=\operatorname{dim}\mathfrak{sym}(M)^\perp
\leq \sum_{i\in[n]}\operatorname{dim}(S_{x_i}M)^\perp
\leq \sum_{i\in[n]}\operatorname{dim}(N_{x_i}M)
=n\cdot\operatorname{codim}M,
\]
where the second inequality follows from Lemma~\ref{lem.normal space}(b).
Rearranging gives the desired bound, and the last statement follows from counting dimensions.
\end{proof}

We observe that in many cases, the bound in Lemma~\ref{lem.sample complexity} is the threshold at which generic samples determine the Lie algebra.
To make this rigorous, let $\mathcal{O}(M^n)$ denote the set of subsets of $M^n\subseteq(\mathbb{R}^d)^n$ that are open and dense in the subspace topology, and define
\[
n^\circ(M)
:=\inf\Big\{n\in\mathbb{N}:\exists O\in\mathcal{O}(M^n),~\forall \{x_i\}_{i\in[n]}\in O,~\mathfrak{sym}(M)=\bigcap_{i\in[n]} S_{x_i}M\Big\}.
\]
(As a mnemonic, think of $\circ$ as the ``o'' in ``open.'')
Lemma~\ref{lem.sample complexity} implies that $n^\circ(M)\geq n^\star(M)$ for every $M$, and in this section, we show that $n^\circ(M)= n^\star(M)$ for several choices of $M$.
We will continually make use of the following:

\begin{lemma}
\label{lem.gl wlog}
For any manifold $M\subseteq\mathbb{R}^d$ and any $Z\in\operatorname{GL}(d)$, it holds that
\[
n^\star(ZM)=n^\star(M),
\qquad
n^\circ(ZM)=n^\circ(M).
\]
\end{lemma}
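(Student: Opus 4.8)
The plan is to exploit the fact that the linear change of coordinates $Z$ intertwines every construction appearing in the definitions of $n^\star$ and $n^\circ$. First I would record the two equivariance identities
\[
\operatorname{Sym}(ZM)=Z\operatorname{Sym}(M)Z^{-1},
\qquad
T_{Zx}(ZM)=Z\cdot T_xM\ \text{ for }x\in M.
\]
The first follows from the chain $AZM=ZM\iff Z^{-1}AZ\in\operatorname{Sym}(M)$; the second is the standard behavior of tangent spaces of a submanifold under the linear diffeomorphism $Z$ (which also shows that $ZM$ is again a manifold). Differentiating paths through the identity then upgrades the first identity to the Lie algebras: if $f\colon I\to\operatorname{Sym}(M)$ satisfies $f(0)=\operatorname{id}$ and $f'(0)=A$, then $t\mapsto Zf(t)Z^{-1}$ is a differentiable path in $\operatorname{Sym}(ZM)$ through the identity with derivative $ZAZ^{-1}$ at $0$, and conversely; hence $\mathfrak{sym}(ZM)=Z\,\mathfrak{sym}(M)\,Z^{-1}$.

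Next I would settle $n^\star(ZM)=n^\star(M)$. Since $A\mapsto ZAZ^{-1}$ is a linear automorphism of $\mathbb{R}^{d\times d}$, it preserves dimensions of subspaces, so $\operatorname{codim}\mathfrak{sym}(ZM)=\operatorname{codim}\mathfrak{sym}(M)$; and since $Z$ is a linear diffeomorphism of $\mathbb{R}^d$, we have $\dim ZM=\dim M$, hence $\operatorname{codim}ZM=\operatorname{codim}M$. Plugging these into the defining ratio for $n^\star$ gives the claim immediately.

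For $n^\circ(ZM)=n^\circ(M)$, I would first use $T_{Zx}(ZM)=Z\,T_xM$ to compute, for every $x\in M$, that $A(Zx)\in T_{Zx}(ZM)\iff Z^{-1}AZ\in S_xM$, i.e.\ $S_{Zx}(ZM)=Z\,(S_xM)\,Z^{-1}$. Intersecting over a sample yields $\bigcap_{i\in[n]}S_{Zx_i}(ZM)=Z\bigl(\bigcap_{i\in[n]}S_{x_i}M\bigr)Z^{-1}$, and combining this with $\mathfrak{sym}(ZM)=Z\,\mathfrak{sym}(M)\,Z^{-1}$ and the invertibility of conjugation shows that, for $x_1,\dots,x_n\in M$, one has $\mathfrak{sym}(ZM)=\bigcap_i S_{Zx_i}(ZM)$ if and only if $\mathfrak{sym}(M)=\bigcap_i S_{x_i}M$. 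Finally, the map $\Phi\colon M^n\to(ZM)^n$ sending $(x_i)_{i\in[n]}$ to $(Zx_i)_{i\in[n]}$ is a homeomorphism for the subspace topologies (its inverse is induced by $Z^{-1}$), so it carries open dense sets to open dense sets and carries the ``good set'' $\{(x_i)\in M^n:\mathfrak{sym}(M)=\bigcap_i S_{x_i}M\}$ bijectively onto the corresponding good set in $(ZM)^n$. Hence an open dense witness over $M$ for a given $n$ exists precisely when one exists over $ZM$, and taking infima gives $n^\circ(ZM)=n^\circ(M)$.

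I expect the only genuinely substantive step to be the pair of equivariance identities in the first paragraph — in particular verifying directly from the path definition of $\mathfrak{sym}$ used in the paper that the Lie algebra transforms by conjugation, and that $T_{Zx}(ZM)=Z\,T_xM$. Everything after that is bookkeeping. One small point to keep straight is that $n^\star$ measures codimension in two different ambient spaces ($\mathbb{R}^{d\times d}$ for $\mathfrak{sym}(M)$ and $\mathbb{R}^d$ for $M$), so the conjugation automorphism and the diffeomorphism $Z$ must each be invoked in its own ambient space.
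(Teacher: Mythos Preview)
Your proposal is correct and follows essentially the same architecture as the paper's proof: establish the conjugation equivariances for $\operatorname{Sym}$, $\mathfrak{sym}$, tangent spaces, and $S_xM$, then read off $n^\star$ from dimension counts and $n^\circ$ from the homeomorphism $M^n\to(ZM)^n$. The one notable difference is your derivation of $S_{Zx}(ZM)=Z\,S_xM\,Z^{-1}$: you obtain it directly from the definition via $A(Zx)\in Z\,T_xM\iff Z^{-1}AZ\in S_xM$, whereas the paper takes a longer detour through normal spaces, computing $N_{Zx}(ZM)=(Z^\top)^{-1}N_xM$, then invoking Lemma~\ref{lem.normal space}(b) to get $(S_{Zx}ZM)^\perp=(Z^\top)^{-1}(S_xM)^\perp Z^\top$, and finally taking orthogonal complements again. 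Your route is cleaner and avoids any appeal to the structure of $(S_xM)^\perp$; the paper's route has the minor side benefit of making explicit how the normal spaces transform, but that is not actually needed for the lemma.
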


\begin{proof}
First, we show that that $\operatorname{Sym}(ZM)=Z\cdot\operatorname{Sym}(M)\cdot Z^{-1}$.
Suppose $A\in\operatorname{Sym}(M)$.
Then $A\in\operatorname{GL}(d)$ such that $AM=M$.
Then $ZAZ^{-1}ZM=ZAM=ZM$, meaning $ZAZ^{-1}\in\operatorname{Sym}(ZM)$.
Similarly, $B\in\operatorname{Sym}(ZM)$ implies $Z^{-1}BZ\in\operatorname{Sym}(M)$, and the claim follows.

Next, we show that that $\mathfrak{sym}(ZM)=Z\cdot\mathfrak{sym}(M)\cdot Z^{-1}$.
Take any continuously differentiable $f\colon I\to\operatorname{Sym}(ZM)$ with $f(0)=\operatorname{id}$.
Then $g\colon I\to\operatorname{Sym}(M)$ defined by $g(t):=Z^{-1}f(t)Z$ satisfies $g(0)=\operatorname{id}$ and $g'(0)=Z^{-1}f'(0)Z$.
This implies $\mathfrak{sym}(ZM)\subseteq Z\cdot\mathfrak{sym}(M)\cdot Z^{-1}$, and the reverse containment holds by a similar argument.

A similar argument demonstrates that $T_{Zx}ZM = Z\cdot T_xM$.
As such, $\operatorname{dim}\mathfrak{sym}(ZM)=\operatorname{dim}\mathfrak{sym}(M)$ and $\operatorname{dim}ZM=\operatorname{dim}M$, from which it follows that $n^\star(ZM)=n^\star(M)$.

Now we verify that $n^\circ(ZM)=n^\circ(M)$.
First, since $\langle Zx,y\rangle=\langle x,Z^\top y\rangle$, it follows that $z$ is orthogonal to $x$ precisely when $(Z^\top)^{-1}z$ is orthogonal to $Zx$.
As such,
\[
N_{Zx}ZM
=(T_{Zx}ZM)^\perp
=(Z\cdot T_xM)^\perp
=(Z^\top)^{-1}\cdot (T_xM)^\perp
=(Z^\top)^{-1}\cdot N_xM.
\]
Combining this with Lemma~\ref{lem.normal space}(b) then gives
\begin{align*}
(S_{Zx}ZM)^\perp
&=\{z(Zx)^\top:z\in N_{Zx}ZM\}\\
&=\{(Z^\top)^{-1} ux^\top Z^\top:u\in N_xM\}
=(Z^\top)^{-1}\cdot(S_{x}M)^\perp\cdot Z^\top.
\end{align*}
Next, since $\langle (Z^\top)^{-1}XZ^\top,Y\rangle=\langle X ,Z^{-1}YZ\rangle$, we may similarly conclude that
\begin{align*}
S_{Zx}ZM
=((S_{Zx}ZM)^\perp)^\perp
&=((Z^\top)^{-1}\cdot(S_{x}M)^\perp\cdot Z^\top)^\perp\\
&=Z\cdot ((S_{x}M)^\perp)^\perp\cdot Z^{-1}
=Z\cdot S_{x}M \cdot Z^{-1}.
\end{align*}
As such, $\mathfrak{sym}(M)=\bigcap_{i\in[n]} S_{x_i}M$ precisely when
\[
\mathfrak{sym}(ZM)
=Z\cdot\mathfrak{sym}(M)\cdot Z^{-1}
=Z\cdot\bigg(\bigcap_{i\in[n]} S_{x_i}M\bigg)\cdot Z^{-1}
=\bigcap_{i\in[n]} S_{Zx_i}ZM.
\]
Finally, since the bicontinuous mapping $f\colon\{x_i\}_{i\in[n]}\to\{Zx_i\}_{i\in[n]}$ has the property that $f(\mathcal{O}(M^n))=\mathcal{O}((ZM)^n)$, the claim follows.
\end{proof}

We start by treating the case in which $M$ is a subspace:

\begin{theorem}
\label{thm.subspace}
Let $M$ be any $r$-dimensional subspace of $\mathbb{R}^d$.
Then $n^\circ(M)=n^\star(M)=r$.
\end{theorem}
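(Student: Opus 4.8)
The plan: By Lemma~\ref{lem.gl wlog}, the quantities $n^\star$ and $n^\circ$ are unchanged under the $\operatorname{GL}(d)$-action, so we may assume $M=\operatorname{span}\{e_1,\dots,e_r\}$ is a coordinate subspace, and we may assume $0<r<d$. The first order of business is to identify $\mathfrak{sym}(M)$. Since $M$ is a subspace, $T_xM=M$ and $N_xM=M^\perp$ for every $x\in M$, so Lemma~\ref{lem.lie alg tangent} gives $\mathfrak{sym}(M)\subseteq\{A:Ax\in M\text{ for all }x\in M\}=\{A:AM\subseteq M\}$. Conversely, if $AM\subseteq M$ then $e^{tA}M\subseteq M$ for all $t$, and since $e^{tA}$ is invertible this forces $e^{tA}M=M$, i.e.\ $e^{tA}\in\operatorname{Sym}(M)$, whence $A\in\mathfrak{sym}(M)$. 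Thus $\mathfrak{sym}(M)=\{A\in\mathbb{R}^{d\times d}:AM\subseteq M\}$, which in our coordinates is the set of matrices whose lower-left $(d-r)\times r$ block vanishes; hence $\operatorname{codim}\mathfrak{sym}(M)=r(d-r)$. Since $\operatorname{codim}M=d-r$, we get $n^\star(M)=r$, and Lemma~\ref{lem.sample complexity} yields $n^\circ(M)\ge n^\star(M)=r$.

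For the matching upper bound $n^\circ(M)\le r$, I would take
\[
O:=\{(x_1,\dots,x_r)\in M^r:x_1,\dots,x_r\text{ are linearly independent}\},
\]
which is open and dense in $M^r$ (under $M\cong\mathbb{R}^r$ it is the nonvanishing locus of the determinant). For any $(x_1,\dots,x_r)\in O$ one has $\bigcap_{i\in[r]}S_{x_i}M=\{A:Ax_i\in M\text{ for all }i\}$, and since $x_1,\dots,x_r$ span $M$ this equals $\{A:AM\subseteq M\}=\mathfrak{sym}(M)$; hence $n^\circ(M)\le r$, completing the proof. (Alternatively, one can invoke the ``furthermore'' clause of Lemma~\ref{lem.sample complexity}: by Lemma~\ref{lem.normal space}(b) together with $N_{x_i}M=M^\perp$, the subspaces $(S_{x_i}M)^\perp=\{zx_i^\top:z\in M^\perp\}$ are linearly independent, since $\sum_i z_ix_i^\top=0$ with $z_i\in M^\perp$ reads $ZX^\top=0$ for $Z=[z_1|\cdots|z_r]$ and $X=[x_1|\cdots|x_r]$, and $X$ having rank $r$ forces $Z=0$.)

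The one place that calls for genuine care is the identification $\mathfrak{sym}(M)=\{A:AM\subseteq M\}$ — specifically the inclusion $\supseteq$ obtained from the matrix exponential, since it is this exact equality (rather than a mere bound) that pins $n^\star(M)$ down to precisely $r$. Everything else — the block dimension count, openness and density of $O$, and the rank argument — is routine bookkeeping.
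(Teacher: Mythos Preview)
Your proof is correct and follows essentially the same route as the paper: reduce to a coordinate subspace via Lemma~\ref{lem.gl wlog}, identify $\mathfrak{sym}(M)$ as the block upper-triangular matrices, count dimensions to get $n^\star(M)=r$, and take $O$ to be the linearly independent $r$-tuples. The only notable differences are cosmetic: you use the matrix exponential for the $\supseteq$ inclusion where the paper uses the linear path $t\mapsto\operatorname{id}+tZ$, and your primary verification that $\bigcap_{i}S_{x_i}M=\mathfrak{sym}(M)$ is the direct spanning argument rather than the paper's check that the $(S_{x_i}M)^\perp$ are linearly independent (which you include as an alternative anyway).
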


\begin{proof}
By Lemma~\ref{lem.gl wlog}, we have $M=\operatorname{span}\{e_i\}_{i\in[r]}$ without loss of generality.
We claim that
\[
\operatorname{Sym}(M)
=\{[\begin{smallmatrix}A&B\\0&D\end{smallmatrix}]:A\in\operatorname{GL}(r),D\in\operatorname{GL}(d-r)\}.
\]
To see this, suppose $[\begin{smallmatrix}A&B\\C&D\end{smallmatrix}]\in\operatorname{GL}(d)$ with $A\in\mathbb{R}^{r\times r}$ satisfies $[\begin{smallmatrix}A&B\\C&D\end{smallmatrix}]M=M$.
Then $C=0$, which forces $A\in\operatorname{GL}(r)$ and $D\in\operatorname{GL}(d-r)$.
Conversely, every $[\begin{smallmatrix}A&B\\C&D\end{smallmatrix}]$ of this form satisfies $[\begin{smallmatrix}A&B\\C&D\end{smallmatrix}]M=M$.

Next, we claim that
\[
\mathfrak{sym}(M)
=\{[\begin{smallmatrix}A&B\\0&D\end{smallmatrix}]:A\in\mathbb{R}^{r\times r},D\in\mathbb{R}^{(d-r)\times(d-r)}\}.
\]
Let $S$ denote the right-hand side.
For ($\subseteq$), select $Z\in\mathfrak{sym}(M)$ and consider any continuously differentiable $f\colon I\to\operatorname{Sym}(M)$ with $f(0)=\operatorname{id}$ and $f'(0)=Z$.
Then $Z=\lim_{h\to0}\frac{1}{h}(f(h)-\operatorname{id})$.
Considering $\frac{1}{h}(f(h)-\operatorname{id})\in S$ for every $h\in I$ and $S$ is closed, it follows that $Z\in S$.
For ($\supseteq$), select $Z\in S$ and pick a small enough interval $I$ about $0$ so that $\operatorname{det}(\operatorname{id}+tZ)\neq0$ for every $t\in I$.
Then $f(t):=\operatorname{id}+tZ$ is a continuously differentiable map from $I$ to $\operatorname{Sym}(M)$.
Furthermore, $f(0)=\operatorname{id}$ and $f'(t)=Z$, and so $Z\in\mathfrak{sym}(M)$.

At this point, we may compute
\[
n^\star(M)
=\frac{\operatorname{codim}\mathfrak{sym}(M)}{\operatorname{codim}M}
=\frac{r(d-r)}{d-r}
=r.
\]
Furthermore, every $x\in M$ takes the form $x=[\begin{smallmatrix}v\\0\end{smallmatrix}]$ with $v\in\mathbb{R}^r$, and so Lemma~\ref{lem.normal space}(b) gives
\[
(S_xM)^\perp
=\{zx^\top:z\in N_xM\}
=\{[\begin{smallmatrix}0&0\\uv^\top&0\end{smallmatrix}]:u\in\mathbb{R}^{d-r}\}.
\]
We claim that if the vectors $\{x_i\}_{i\in[n]}$ are linearly independent, then the subspaces $\{(S_{x_i}M)^\perp\}_{i\in[n]}$ are linearly independent.
To see this, suppose $\{(S_{x_i}M)^\perp\}_{i\in[n]}$ are dependent.
Then there exist scalars $\{\alpha_{ij}\}_{i\in[n],j\in[r]}$, not all of which are zero, such that
\[
\sum_{i\in[n]}\sum_{j\in[d-r]}\alpha_{ij} e_{r+j}x_i^\top
=0.
\]
Select $p\in[n]$ and $q\in[d-r]$ such that $\alpha_{pq}\neq0$.
Then
\[
\sum_{i\in[n]}\alpha_{iq}x_i^\top
=e_q^\top\bigg(\sum_{i\in[n]}\sum_{j\in[d-r]}\alpha_{ij} e_{r+j}x_i^\top\bigg)
=0,
\]
i.e., the vectors $\{x_i\}_{i\in[n]}$ are dependent, as claimed.
Considering Lemma~\ref{lem.sample complexity}, the result follows by taking $O$ to be the set of linearly independent $\{x_i\}_{i\in[r]}\in M^r$.
\end{proof}

Next, we consider the case in which $M$ is a strictly affine subspace, that is, any translation of a subspace that is not itself a subspace:

\begin{theorem}
\label{thm.affine subspace}
Let $M$ be any $r$-dimensional strictly affine subspace of $\mathbb{R}^d$.
Then $n^\circ(M)=n^\star(M)=r+1$.
\end{theorem}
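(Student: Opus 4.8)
The plan is to follow the template of Theorem~\ref{thm.subspace}; the only new feature is that one diagonal entry of $\operatorname{Sym}(M)$ is pinned to $1$, which is what converts $r$ into $r+1$. By Lemma~\ref{lem.gl wlog} we may normalize $M$: since $M$ is strictly affine of dimension $r$, its linear span has dimension $r+1\le d$, so some basis of $\mathbb{R}^d$ has its first $r$ vectors spanning the direction space of $M$ and its $(r+1)$st vector lying in $M$; the corresponding $Z\in\operatorname{GL}(d)$ carries $M$ to $\{(v,1,0):v\in\mathbb{R}^r\}$, where I group coordinates into blocks of sizes $r$, $1$, $d-r-1$. From now on $M$ denotes this model, so that $T_xM=\operatorname{span}\{e_1,\dots,e_r\}$ and $N_xM=\operatorname{span}\{e_{r+1},\dots,e_d\}$ for every $x\in M$.

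First I would compute $\operatorname{Sym}(M)$ and $\mathfrak{sym}(M)$ exactly as in Theorem~\ref{thm.subspace}. Imposing $A(v,1,0)\in M$ for all $v\in\mathbb{R}^r$, together with $A\in\operatorname{GL}(d)$ (and noting an $r$-dimensional affine subspace contained in another must equal it), forces
\[
\operatorname{Sym}(M)=\{[\begin{smallmatrix}A_{11}&A_{12}&A_{13}\\0&1&A_{23}\\0&0&A_{33}\end{smallmatrix}]:A_{11}\in\operatorname{GL}(r),~A_{33}\in\operatorname{GL}(d-r-1)\},
\]
and the same closed-set / line-segment argument shows $\mathfrak{sym}(M)$ is the set of matrices of this block shape with the $(r+1,r+1)$ entry replaced by $0$ and all other blocks free. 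Counting parameters gives $\dim\mathfrak{sym}(M)=rd+(d-r-1)(d-r)$, hence $\operatorname{codim}\mathfrak{sym}(M)=(r+1)(d-r)$ and $n^\star(M)=\operatorname{codim}\mathfrak{sym}(M)/\operatorname{codim}M=(r+1)(d-r)/(d-r)=r+1$.

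It remains to prove $n^\circ(M)=r+1$. Since Lemma~\ref{lem.sample complexity} gives $n^\circ(M)\ge n^\star(M)=r+1$, I only need an open dense $O\subseteq M^{r+1}$ on which the subspaces $\{(S_{x_i}M)^\perp\}_{i\in[r+1]}$ are linearly independent, for then the ``furthermore'' clause of Lemma~\ref{lem.sample complexity} yields $\mathfrak{sym}(M)=\bigcap_{i\in[r+1]}S_{x_i}M$ on $O$. By Lemma~\ref{lem.normal space}(b), $(S_xM)^\perp=\{zx^\top:z\in N_xM\}$, so writing $x_i=(v_i,1,0)$ and $z_i=(0,a_i,b_i)$ with $a_i\in\mathbb{R}$ and $b_i\in\mathbb{R}^{d-r-1}$, the identity $\sum_i z_ix_i^\top=0$ decouples row by row into $\sum_i a_ix_i=0$ and $\sum_i b_i^{(k)}x_i=0$ for each $k\in[d-r-1]$. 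Hence if $\{x_i\}_{i\in[r+1]}$ are linearly independent in $\mathbb{R}^d$, then every $a_i$ and every $b_i^{(k)}$ vanishes, so each $z_i=0$; since $z\mapsto zx_i^\top$ is injective on $N_{x_i}M$ (as $x_i\ne0$), the $\{(S_{x_i}M)^\perp\}_{i\in[r+1]}$ are linearly independent. Taking $O$ to be the set of linearly independent $(x_i)_{i\in[r+1]}\in M^{r+1}$ then works: it is open, and it is dense because $\operatorname{span}(M)$ has dimension $r+1$, so the relevant determinant is a nonzero polynomial on $M^{r+1}$. This gives $n^\circ(M)\le r+1$, completing the argument.

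The step requiring the most care is identifying $\operatorname{Sym}(M)$ and $\mathfrak{sym}(M)$ in block form, specifically keeping straight that the $(r+1,r+1)$ entry is forced to equal $1$ in the group but $0$ in the Lie algebra, since this single discrepancy is precisely what produces the extra sample relative to Theorem~\ref{thm.subspace}. The density claim at the end also deserves an explicit sentence, since $M^{r+1}$ carries the subspace topology rather than being an open subset of a Euclidean space.
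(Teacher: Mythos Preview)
Your proposal is correct and follows essentially the same approach as the paper's proof sketch: normalize $M$ via Lemma~\ref{lem.gl wlog}, identify $\operatorname{Sym}(M)$ and $\mathfrak{sym}(M)$ in block form, compute $n^\star(M)=r+1$, and then show that linear independence of $\{x_i\}_{i\in[r+1]}$ implies linear independence of $\{(S_{x_i}M)^\perp\}_{i\in[r+1]}$. The only cosmetic difference is that you use a $3\times 3$ block decomposition (isolating the $(r{+}1)$st coordinate), whereas the paper keeps a $2\times 2$ block structure and encodes the affine constraint as $De_1=e_1$ in the group and $De_1=0$ in the Lie algebra; these are equivalent, and your row-by-row decoupling argument for independence simply spells out what the paper's sketch asserts without detail.
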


\begin{proof}[Proof sketch]
The proof is nearly identical to that of Theorem~\ref{thm.subspace}, so we state the intermediate claims without providing details.
By Lemma~\ref{lem.gl wlog}, we have $M=\operatorname{span}\{e_i\}_{i\in[r]}+e_{r+1}$ without loss of generality.
Then
\begin{align*}
\operatorname{Sym}(M)
&=\{[\begin{smallmatrix}A&B\\0&D\end{smallmatrix}]:A\in\operatorname{GL}(r),D\in\operatorname{GL}(d-r),De_1=e_1\},\\
\mathfrak{sym}(M)
&=\{[\begin{smallmatrix}A&B\\0&D\end{smallmatrix}]:A\in\mathbb{R}^{r\times r},D\in\mathbb{R}^{(d-r)\times(d-r)},De_1=0\}.
\end{align*}
It follows that
\[
n^\star(M)
=\frac{\operatorname{codim}\mathfrak{sym}(M)}{\operatorname{codim}M}
=\frac{(r+1)(d-r)}{d-r}
=r+1.
\]
Furthermore, if $\{x_i\}_{i\in[n]}$ are independent, then $\{(S_{x_i}M)^\perp\}_{i\in[n]}$ are independent.
The result follows by taking $O$ to be the set of linearly independent $\{x_i\}_{i\in[r+1]}\in M^{r+1}$.
\end{proof}

Next, we consider certain types of quadrics.
The first type of quadric we consider includes spheres and hyperboloids:

\begin{theorem}
\label{thm.quadric}
Select any full rank $Q\in \mathbb{R}_\mathrm{sym}^{d\times d}$ and put $M:=\{x\in\mathbb{R}^d:x^\top Qx=1\}$.
\begin{itemize}
\item[(a)]
$M$ is empty if and only if $Q$ is negative definite.
\item[(b)]
Otherwise, $n^\circ(M)=n^\star(M)=\binom{d+1}{2}$.
\end{itemize}
\end{theorem}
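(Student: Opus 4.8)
My plan is the following. Part~(a) is immediate: if $Q$ is negative definite then $x^\top Qx\le0<1$ for all $x$, so $M=\emptyset$; and if $Q$ is not negative definite then, being full rank and symmetric, it has a positive eigenvalue, and rescaling a corresponding eigenvector produces a point of $M$. For part~(b), I would first normalize. By Sylvester's law of inertia there is $Z\in\operatorname{GL}(d)$ with $ZM=\{y:y^\top\operatorname{diag}(I_p,-I_{d-p})y=1\}$, and by Lemma~\ref{lem.gl wlog} passing from $M$ to $ZM$ changes neither $n^\star$ nor $n^\circ$; so I assume $Q=\operatorname{diag}(I_p,-I_{d-p})$, where $p\ge1$ by part~(a). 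The engine of the argument is a rigidity statement: \emph{if $R\in\mathbb{R}_\mathrm{sym}^{d\times d}$ satisfies $x^\top Rx=0$ for all $x\in M$, then $R=0$.} To prove it, fix $x_0\in M$ and an arbitrary $v\in\mathbb{R}^d$; since $t\mapsto(x_0+tv)^\top Q(x_0+tv)$ equals $1>0$ at $t=0$ it stays positive near $0$, so normalizing $x_0+tv$ onto $M$ and clearing the resulting positive scalar shows that $t\mapsto(x_0+tv)^\top R(x_0+tv)$, a polynomial of degree at most $2$, vanishes on a neighborhood of $0$; hence all of its coefficients vanish, and in particular $v^\top Rv=0$. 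As $v$ was arbitrary, polarization gives $R=0$.

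Next I would run the rigidity statement twice. First, if $A\in\operatorname{GL}(d)$ satisfies $AM=M$ then $M$ is also the quadric cut out by $A^\top QA$, so $R:=A^\top QA-Q$ vanishes on $M$ and hence $A^\top QA=Q$; the reverse inclusion is routine, so $\operatorname{Sym}(M)=\operatorname{O}(Q):=\{A\in\operatorname{GL}(d):A^\top QA=Q\}$. Differentiating the defining relation at $t=0$ (and, for the reverse containment, checking that $e^{tX}\in\operatorname{O}(Q)$ whenever $X^\top Q+QX=0$) identifies $\mathfrak{sym}(M)=\{X\in\mathbb{R}^{d\times d}:X^\top Q+QX=0\}$; the substitution $Y=QX$ converts this to ``$Y$ skew-symmetric'', so $\dim\mathfrak{sym}(M)=\binom d2$ and $\operatorname{codim}\mathfrak{sym}(M)=\binom{d+1}{2}$. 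Since $Qx\ne0$ on $M$, the value $1$ is a regular value of $x\mapsto x^\top Qx$, so $M$ is a hypersurface and $\operatorname{codim}M=1$; Lemma~\ref{lem.sample complexity} then gives $n^\star(M)=\binom{d+1}{2}$.

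To obtain $n^\circ(M)=n^\star(M)$, write $N:=\binom{d+1}{2}$. By Lemma~\ref{lem.sample complexity} it is enough to produce an open dense $O\subseteq M^N$ on which the lines $(S_{x_i}M)^\perp$ are linearly independent. Since $M$ is a hypersurface, $N_xM=\operatorname{span}\{Qx\}$, so Lemma~\ref{lem.normal space}(b) gives $(S_xM)^\perp=\operatorname{span}\{Qxx^\top\}$; as $Q$ is invertible, these lines are independent precisely when $x_1x_1^\top,\dots,x_Nx_N^\top$ are linearly independent in the $N$-dimensional space $\mathbb{R}_\mathrm{sym}^{d\times d}$. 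The rigidity statement is exactly the assertion that no nonzero element of $\mathbb{R}_\mathrm{sym}^{d\times d}$ is orthogonal to every $xx^\top$ with $x\in M$, i.e.\ that $\{xx^\top:x\in M\}$ spans $\mathbb{R}_\mathrm{sym}^{d\times d}$. Fixing a basis of $\mathbb{R}_\mathrm{sym}^{d\times d}$ and letting $\Phi(x_1,\dots,x_N)$ be the determinant of the matrix whose columns list the coordinates of $x_1x_1^\top,\dots,x_Nx_N^\top$ in that basis, the spanning property shows that the polynomial $\Phi$ is not identically zero on $M^N$, and I would take $O:=\{\Phi\ne0\}$, which is automatically open.

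The step I expect to require genuine care is the \emph{density} of $O$, since $M$ need not be connected: writing $x=(u,w)\in\mathbb{R}^p\times\mathbb{R}^{d-p}$ one has $M\cong S^{p-1}\times\mathbb{R}^{d-p}$, which has two components when $p=1$ and is connected otherwise, so $M^N$ is a finite disjoint union of products of these. The way out is that $\Phi$ is invariant under negating any single argument (because $(-x)(-x)^\top=xx^\top$), while $-\operatorname{id}\in\operatorname{O}(Q)=\operatorname{Sym}(M)$ interchanges the two sheets of $M$ when $p=1$; hence the coordinate-negation homeomorphisms of $M^N$ act transitively on its connected components and preserve $\{\Phi\ne0\}$, so $\Phi$ fails to vanish identically on any one component. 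Since $\Phi$ is real-analytic, its zero set is therefore nowhere dense in $M^N$, making $O$ open and dense; Lemma~\ref{lem.sample complexity} then yields $n^\circ(M)=n^\star(M)=\binom{d+1}{2}$. Aside from this bookkeeping, the only real obstacle is proving the rigidity statement cleanly, as both the identification $\operatorname{Sym}(M)=\operatorname{O}(Q)$ and the spanning property of $\{xx^\top:x\in M\}$ rest on it.
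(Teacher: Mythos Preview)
Your proposal is correct and follows the same overall architecture as the paper: normalize $Q$ to $\operatorname{diag}(I_p,-I_{d-p})$ via Lemma~\ref{lem.gl wlog}, identify $\operatorname{Sym}(M)=\operatorname{O}(Q)$ and $\mathfrak{sym}(M)=\mathfrak{o}(Q)$, compute $n^\star(M)=\binom{d+1}{2}$, and then use Lemma~\ref{lem.normal space}(b) to reduce $n^\circ(M)=n^\star(M)$ to showing that $\{x_ix_i^\top\}_{i\in[N]}$ is generically a basis of $\mathbb{R}_\mathrm{sym}^{d\times d}$. Two technical steps are handled differently. For the rigidity/spanning statement, the paper differentiates $f(t)^\top Af(t)=0$ along curves in $M$ to get $Ax\in\operatorname{span}\{Qx\}$, then uses $x^\top Ax=0$ to force $Ax=0$, and finally checks by an explicit construction that $M$ spans $\mathbb{R}^d$; your scaling trick---radially project $x_0+tv$ onto $M$ and read off the $t^2$ coefficient of the resulting quadratic in $t$---is more direct and eliminates that auxiliary spanning check. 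For the density of $O$ in $M^N$, the paper pushes forward the dense set $O'\cap R^N$ under the continuous normalization $R\to M$, where $R=\{x:x^\top Qx>0\}$; you instead observe that $\Phi$ is real-analytic, use the $x_i\mapsto -x_i$ symmetry to transport nonvanishing of $\Phi$ across the components of $M^N$, and invoke the identity theorem for real-analytic functions on connected manifolds. Your route is clean but imports an analytic fact; the paper's projection argument is more self-contained and sidesteps connectivity entirely, since $R$ surjects onto all of $M$ regardless of how many sheets $M$ has.
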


We will apply the following well-known consequence of the implicit function theorem:

\begin{lemma}
\label{lem.normal gradient}
Select any continuously differentiable $f\colon\mathbb{R}^d\to\mathbb{R}$ and put
\[
M:=\{x\in\mathbb{R}^d:f(x)=0,\nabla f(x)\neq0\}.
\]
Then $N_xM=\operatorname{span}\{\nabla f(x)\}$.
\end{lemma}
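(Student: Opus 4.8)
The plan is to prove the equivalent identity $T_xM=\{\nabla f(x)\}^\perp$ and then pass to orthogonal complements, since $N_xM=(T_xM)^\perp$. Fix $x\in M$. I would establish two things: (i) every tangent vector at $x$ is orthogonal to $\nabla f(x)$, i.e., $T_xM\subseteq\{\nabla f(x)\}^\perp$; and (ii) $\dim T_xM=d-1$. Because $\nabla f(x)\neq0$, the hyperplane $\{\nabla f(x)\}^\perp$ also has dimension $d-1$, so (i) and (ii) together force $T_xM=\{\nabla f(x)\}^\perp$, and hence $N_xM=(T_xM)^\perp=\operatorname{span}\{\nabla f(x)\}$.

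For (i), let $v\in T_xM$, and pick a differentiable curve $\gamma\colon I\to M$ with $\gamma(0)=x$ and $\gamma'(0)=v$, where $0\in I\subseteq\mathbb{R}$ is an open interval. Since $\gamma(t)\in M$ for all $t\in I$, the composition $f\circ\gamma$ vanishes identically on $I$. Differentiating at $t=0$ and applying the chain rule gives $\langle\nabla f(x),v\rangle=(f\circ\gamma)'(0)=0$, so $v\perp\nabla f(x)$, as desired.

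For (ii), since $\nabla f(x)\neq0$, some partial derivative $\partial_k f(x)$ is nonzero; after permuting coordinates we may assume $k=d$. The implicit function theorem then yields a neighborhood of $x$ in which $M$ coincides with the graph $\{(u,\varphi(u)):u\in U\}$ of a continuously differentiable function $\varphi$ of the first $d-1$ coordinates, defined on an open $U\subseteq\mathbb{R}^{d-1}$. Thus near $x$ the set $M$ is an embedded $C^1$ submanifold of dimension $d-1$, and its tangent space in the sense used throughout the paper (the span of velocities $\gamma'(0)$ of differentiable curves $\gamma$ in $M$ through $x$) is exactly the $(d-1)$-dimensional tangent space of this graph; in particular $\dim T_xM=d-1$. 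Combining with (i) completes the proof. The only point requiring care — more bookkeeping than genuine obstacle — is checking that this curve-based tangent space agrees with the tangent space of the graph, which holds because every such curve lies in the graph near $x$ and conversely every graph tangent direction is realized by the curve $t\mapsto(u_0+tw,\varphi(u_0+tw))$.
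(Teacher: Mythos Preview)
Your proof is correct and follows precisely the route the paper alludes to: the paper does not actually prove this lemma but simply introduces it as ``a well-known consequence of the implicit function theorem,'' which is exactly what your argument supplies. Your two-step approach (curve differentiation for the inclusion, implicit function theorem for the dimension count) is the standard way to unpack this folklore statement.
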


\begin{proof}[Proof of Theorem~\ref{thm.quadric}]
First, (a) is immediate.
For (b), take the eigenvalue decomposition $Q=U\Lambda U^\top$, and we decompose $\Lambda=T^{1/2}ST^{1/2}$ with $T=\operatorname{abs}(\Lambda)$ and $S=\operatorname{sgn}(\Lambda)$, where these operations are performed entrywise.
Notice that $Q$ being full rank implies that $U$ and $T$ are both full rank.
Substituting $x=T^{1/2}U^\top z$ gives
\begin{align*}
M
&:=\{x\in\mathbb{R}^d:x^\top Qx=1\}\\
&=\{x\in\mathbb{R}^d:x^\top UT^{1/2}ST^{1/2} U^\top x=1\}
=(T^{1/2} U^\top)^{-1}\cdot \{z\in\mathbb{R}^d: z^\top Sz=1\}.
\end{align*}
By Lemma~\ref{lem.gl wlog}, we may assume without loss of generality that  $Q=\operatorname{diag}(\mathbf{1}_p,-\mathbf{1}_q)$ for some $p\in[d]$ and $q=d-p$.
(Here and throughout, $\mathbf{1}_n$ denotes the all-ones vector in $\mathbb{R}^n$.)

We claim that $N_xM=\operatorname{span}\{Qx\}$ for every $x\in M$.
Defining $f(x):=x^\top Qx-1=\sum_{i}Q_{ii}x_i^2-1$, then $\nabla f(x)=2Qx$.
As such, $x\in M$ only if $x^\top Qx=1$, only if $x\neq 0$, only if $\nabla f(x)=2Qx\neq0$.
Our claim then follows from Lemma~\ref{lem.normal gradient}.

Next, we verify that $M$ spans $\mathbb{R}^d$.
There are two cases to consider.
If $Q$ is the identity matrix, then $M$ contains the identity basis $\{e_i\}_{i\in[d]}$ and therefore spans.
Otherwise, $Q=\operatorname{diag}(\mathbf{1}_p,-\mathbf{1}_q)$ for some $p\in[d-1]$ and $q=d-p$.
Considering
\[
(\sqrt{2}e_1+e_j)^\top Q(\sqrt{2}e_1+e_j)
=2\|e_1\|^2-\|e_j\|^2
=1
\]
for every $j>p$, it follows that $M$ contains $\{e_i\}_{i=1}^p\cup\{\sqrt{2}e_1+e_j\}_{j=p+1}^d$ and therefore spans.

Next, we verify that $L:=\{xx^\top:x\in M\}$ spans $\mathbb{R}_\mathrm{sym}^{d\times d}$.
To accomplish this, we select an arbitrary $A\in \mathbb{R}_\mathrm{sym}^{d\times d}$ that is orthogonal to $L$, and we show that $A=0$.
Since
\[
x^\top Ax
=\operatorname{tr}(x^\top Ax)
=\operatorname{tr}(Axx^\top)
=\langle A,xx^\top\rangle
=0
\]
for every $x\in M$, it follows that $M\subseteq\{x\in\mathbb{R}^d:x^\top Ax=0\}$.
Now fix $x\in M$ and select an arbitrary $y\in T_xM=\operatorname{span}\{Qx\}^\perp$ and differentiable $f\colon I\to M$ such that $f(0)=x$ and $f'(0)=y$.
Then $f(t)^\top Af(t)=0$ for all $t\in I$, and so the product rule gives
\[
f'(t)^\top A f(t)+f(t)^\top A f'(t)
=0.
\]
Evaluating at $t=0$ gives $y^\top Ax=0$, and so $y\in\operatorname{span}\{Ax\}^\perp$.
This establishes that $\operatorname{span}\{Qx\}^\perp\subseteq\operatorname{span}\{Ax\}^\perp$, meaning $\operatorname{span}\{Ax\}\subseteq\operatorname{span}\{Qx\}$, which in turn implies that there exists $\alpha\in\mathbb{R}$ such that $Ax=\alpha Qx$.
Considering
\[
0
=x^\top Ax
=\alpha x^\top Qx
=\alpha,
\]
it follows that $Ax=0$.
Since our choice for $x\in M$ was arbitrary, and furthermore, $M$ spans $\mathbb{R}^d$, we conclude that $A=0$, as desired.

Next, we establish that $\operatorname{Sym}(M)=\{A\in\operatorname{GL}(d):A^\top QA=Q\}$.
For ($\subseteq$), select any $A\in\operatorname{Sym}(M)$.
Then $A\in\operatorname{GL}(d)$ by definition.
Furthermore, for every $x\in M$, it holds that $Ax\in M$, and so
\[
\langle A^\top QA,xx^\top\rangle
=(Ax)^\top QAx
=1
=x^\top Qx
=\langle Q,xx^\top\rangle.
\]
Since $\{xx^\top:x\in M\}$ spans $\mathbb{R}_\mathrm{sym}^{d\times d}$, it follows that $A^\top QA=Q$.
For ($\supseteq$), suppose $A\in\operatorname{GL}(d)$ satisfies $A^\top QA=Q$.
Then for every $x\in\mathbb{R}^d$, it holds that
\[
(Ax)^\top QAx
=x^\top A^\top QAx
=x^\top Qx,
\]
meaning $AM=M$.
As such, $A\in\operatorname{Sym}(M)$, as desired.

In words, we have shown that $\operatorname{Sym}(M)$ is the group of linear transformations $A\in\operatorname{GL}(d)$ that leave invariant the symmetric bilinear form $(x,y)\mapsto x^\top Qy$.
This group is known as the (indefinite) orthogonal group $\operatorname{O}(p,q)$, where $\operatorname{O}(d,0):=\operatorname{O}(d)$.
We claim that the corresponding Lie algebra is given by
\[
\mathfrak{sym}(M)
=\{Z\in\mathbb{R}^{d\times d}:Z^\top=-QZQ\}
=\{[\begin{smallmatrix}
  A & B\\
  B^\top & D
\end{smallmatrix}]:A\in\mathbb{R}_\mathrm{antisym}^{p\times p},~D\in\mathbb{R}_\mathrm{antisym}^{q\times q}\}.
\]
(This is presumably well known, but the proof is short, so we include it.)
Select $Z\in\mathfrak{sym}(M)$ so that there exists a differentiable $f\colon I\to\operatorname{Sym}(M)$ such that $f(0)=\operatorname{id}$ and $f'(0)=Z$.
Differentiating the identity $f(t)^\top Qf(t)=Q$ gives
\[
f'(t)^\top Qf(t)+f(t)^\top Qf'(t)=0.
\]
Evaluating at $t=0$ then gives $Z^\top Q+QZ=0$, meaning $Z^\top=-QZQ$.
Writing $Z=[\begin{smallmatrix}A&B\\C&D\end{smallmatrix}]$ then reveals that
\[
[\begin{smallmatrix}A^\top&C^\top\\B^\top&D^\top\end{smallmatrix}]
=Z^\top
=-QZQ
=[\begin{smallmatrix}-A&B\\C&-D\end{smallmatrix}],
\]
from which it follows that $Z=[\begin{smallmatrix}A&B\\B^\top&D\end{smallmatrix}]$ with $A\in\mathbb{R}_\mathrm{antisym}^{p\times p}$ and $D\in\mathbb{R}_\mathrm{antisym}^{q\times q}$.
Furthermore, any such matrix satisfies
\[
[\begin{smallmatrix}A&B\\B^\top&D\end{smallmatrix}]^\top
=[\begin{smallmatrix}A^\top&B\\B^\top&D^\top\end{smallmatrix}]
=[\begin{smallmatrix}-A&B\\B^\top&-D\end{smallmatrix}]
=-Q[\begin{smallmatrix}A&B\\B^\top&D\end{smallmatrix}]Q.
\]
It remains to show that every $Z\in\mathbb{R}^{d\times d}$ satisfying $Z^\top=-QZQ$ necessarily resides in $\mathfrak{sym}(M)$.
To this end, define $f\colon\mathbb{R}\to\operatorname{GL}(d)$ by $f(t)=e^{tZ}$.
Then $f(0)=\operatorname{id}$ and $f'(0)=Z$.
It suffices to verify that $f(t)^\top Qf(t)=Q$, since this would imply $f\colon\mathbb{R}\to\operatorname{Sym}(M)$, meaning $Z\in\mathfrak{sym}(M)$.
Since $Q^{-1}=Q$, we have
\[
f(t)^\top Qf(t)
=e^{tZ^\top}Qe^{tZ}
=e^{-tQZQ}Qe^{tZ}
=e^{-tQZQ^{-1}}Qe^{tZ}
=Qe^{-tZ}Q^{-1}Qe^{tZ}
=Q.
\]

At this point, we may compute
\[
n^\star(M)
=\frac{\operatorname{codim}\mathfrak{sym}(M)}{\operatorname{codim}M}
=\frac{d^2-\binom{d}{2}}{d-(d-1)}
=\binom{d+1}{2}.
\]
Furthermore, Lemma~\ref{lem.normal space}(b) gives
\[
(S_xM)^\perp
=\{zx^\top:z\in N_xM\}
=\{zx^\top:z\in \operatorname{span}\{Qx\}\}
=\operatorname{span}\{Qxx^\top\}.
\]
Put $n:=\binom{d+1}{2}$.
Considering Lemma~\ref{lem.sample complexity}, we want to find $\{x_i\}_{i\in[n]}$ in $M$ for which the subspaces $\{\operatorname{span}\{Qx_ix_i^\top\}\}_{i\in[n]}$ are linearly independent.
This occurs precisely when $\{Qx_ix_i^\top\}_{i\in[n]}$ are linearly independent, which in turn occurs precisely when $\{x_ix_i^\top\}_{i\in[n]}$ are linearly independent.
Overall, we wish to find an open and dense subset $O$ of $M^n$ such that for every $\{x_i\}_{i\in[n]}\in O$, it holds that $\{x_ix_i^\top\}_{i\in[n]}$ are linearly independent.

First, let $O'$ denote the set of $\{x_i\}_{i\in[n]}\in (\mathbb{R}^d)^n$ such that $\{x_i x_i^\top\}_{i\in[n]}$ is linearly independent.
We claim that $O'$ is open and dense in $(\mathbb{R}^d)^n$.
Select an orthonormal basis $\{B_i\}_{i\in[n]}$ for $\mathbb{R}_\mathrm{sym}^{d\times d}$, consider the mapping $A\colon(\mathbb{R}^d)^n\to\mathbb{R}^{n\times n}$ defined by $(A(\{x_i\}_{i\in[n]}))_{jk}:=\langle B_j,x_kx_k^\top\rangle$, and let $p$ denote the polynomial in $dn$ variables defined by
\[
p(\{x_i\}_{i\in[n]}):=\operatorname{det}A(\{x_i\}_{i\in[n]}).
\]
Then $O'=p^{-1}(\mathbb{R}\setminus\{0\})$, which is open and dense in $(\mathbb{R}^d)^n$ provided $p$ is not the zero polynomial.
As such, it suffices to find $\{x_i\}_{i\in[n]}$ such that $p(\{x_i\}_{i\in[n]})\neq0$.
To this end, for each $i\in[n]$, consider the eigenvalue decomposition $B_i=\sum_{j\in[d]}\lambda_{ij}u_{ij}u_{ij}^\top$.
Then $\{u_{ij}u_{ij}^\top\}_{i\in[d],j\in[n]}$ is a spanning set for $\mathbb{R}_\mathrm{sym}^{d\times d}$, and so any choice of basis $\{u_{ij}u_{ij}^\top\}_{(i,j)\in S}$ in this spanning set has the desired property that $p(\{u_{ij}\}_{(i,j)\in S})\neq0$.

Finally, we claim that $O:=O'\cap M^n$ is open and dense in $M^n$.
Openness follows from the definition of the subspace topology.
To demonstrate denseness, consider the open set
\[
R
:=\{x\in\mathbb{R}^d:x^\top Qx>0\}.
\]
The mapping $g\colon R\to M$ defined by $g(x):=x/\sqrt{x^\top Qx}$ is surjective since $g(x)=x$ for every $x\in M\subseteq R$.
Moreover, for every $\{x_i\}_{i\in[n]}\in O'\cap R^m$, it holds that $\{x_ix_i^\top\}_{i\in[n]}$ is linearly independent, and so $\{g(x_i)g(x_i)^\top\}_{i\in[n]}$ is also linearly independent, meaning $\{g(x_i)\}_{i\in[n]}\in O$.
As such, the continuous function $h\colon R^n\to M^n$ defined by $h(\{x_i\}_{i\in[n]}):=\{g(x_i)\}_{i\in[n]}$ has the property that $h(O'\cap R^n)=O$.
Since $O'\cap R^n$ is dense in $R^n$, it follows that $O=h(O'\cap R^n)$ is dense in $M^n=h(R^n)$, as desired.
\end{proof}

Finally, we consider a family of quadrics that includes cones:

\begin{theorem}
\label{thm.zero}
Select any full rank $Q\in \mathbb{R}_\mathrm{sym}^{d\times d}$ and put $M:=\{x\in\mathbb{R}^d \setminus \{0\}:x^\top Qx=0\}$.
\begin{itemize}
\item[(a)]
$M$ is empty if and only if $Q$ is positive definite or negative definite.
\item[(b)]
Otherwise, if $d=2$, then $n^\star(M) = 2$ but $n^\circ(M) =\infty$.
\item[(c)]
Otherwise, $n^\circ(M) = n^\star(M) = \binom{d+1}{2}-1$.
\end{itemize}
\end{theorem}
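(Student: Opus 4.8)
The plan is to follow the proof of Theorem~\ref{thm.quadric} as closely as possible, isolating the two places where the cone differs from a nonzero level set. Part~(a) is immediate: a definite $Q$ forces $x^\top Qx=0\Rightarrow x=0$, while an indefinite $Q$ admits (after diagonalizing) a point such as $e_1+e_{p+1}$ in $M$. For (b) and (c), I would use Lemma~\ref{lem.gl wlog} to reduce to $Q=\operatorname{diag}(\mathbf 1_p,-\mathbf 1_q)$ with $p,q\ge1$ and $p+q=d$. Taking $f(x):=x^\top Qx$, so $\nabla f(x)=2Qx\ne0$ on $M$, Lemma~\ref{lem.normal gradient} gives $N_xM=\operatorname{span}\{Qx\}$, and hence Lemma~\ref{lem.normal space}(b) gives $(S_xM)^\perp=\operatorname{span}\{Qxx^\top\}$, exactly as in the proof of Theorem~\ref{thm.quadric}.

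The structural heart is to show that $L:=\{xx^\top:x\in M\}$ spans $V:=\{S\in\mathbb{R}_\mathrm{sym}^{d\times d}:\langle S,Q\rangle=0\}$; note $L\subseteq V$ since $\langle xx^\top,Q\rangle=x^\top Qx=0$, and $\dim V=\binom{d+1}{2}-1$. I would do this by building an explicit basis of $V$ inside $L$: from the cone points $e_i\pm e_j$ with $i\le p<j$ one obtains, via sums and differences of the rank-one matrices, all $e_ie_j^\top+e_je_i^\top$ with $i\le p<j$ together with enough diagonal directions; from points $e_i+e_{i'}+\sqrt2\,e_j$ with $i,i'$ on the same side and $j$ on the other one obtains the remaining cross terms, and a quick count gives $\binom d2+(d-1)=\dim V$ independent elements. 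From $L$ spanning $V$ one deduces, arguing verbatim as in Theorem~\ref{thm.quadric}, that $\operatorname{Sym}(M)=\{A\in\operatorname{GL}(d):A^\top QA=\lambda Q\text{ for some }\lambda\ne0\}$ (the conformal group containing $\operatorname{O}(p,q)$) and
\[
\mathfrak{sym}(M)=\{Z\in\mathbb{R}^{d\times d}:Z^\top Q+QZ\in\mathbb{R}Q\},
\]
the nontrivial inclusion $(\supseteq)$ witnessed by $t\mapsto e^{tZ}$ using $Q^2=\operatorname{id}$. Since $Z\mapsto Z^\top Q+QZ$ surjects onto $\mathbb{R}_\mathrm{sym}^{d\times d}$ with kernel the Lie algebra of $\operatorname{O}(p,q)$, we get $\dim\mathfrak{sym}(M)=\binom d2+1$, so
\[
n^\star(M)=\frac{\operatorname{codim}\mathfrak{sym}(M)}{\operatorname{codim}M}=\frac{d^2-\binom d2-1}{1}=\binom{d+1}{2}-1.
\]

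For (c), put $n:=\binom{d+1}{2}-1$. By Lemma~\ref{lem.sample complexity} it suffices to produce an open dense $O\subseteq M^n$ on which the subspaces $\{(S_{x_i}M)^\perp\}_i=\{\operatorname{span}\{Qx_ix_i^\top\}\}_i$ are linearly independent, equivalently $\{x_ix_i^\top\}_i$ is a basis of $V$. As in Theorem~\ref{thm.quadric}, I would fix an orthonormal basis $\{B_j\}_{j\in[n]}$ of $V$, set $p(\{x_i\}):=\det(\langle B_j,x_kx_k^\top\rangle)_{j,k}$, and take $O:=\{p\ne0\}\cap M^n$, which is open in $M^n$. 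The one new issue is density: unlike the quadric of Theorem~\ref{thm.quadric}, $M$ may be disconnected, but for $d\ge3$ it has at most two components (by deformation retracting onto $S^{p-1}\times S^{q-1}$), and the two components, when present, are swapped by $x\mapsto-x$; since $xx^\top=(-x)(-x)^\top$, every component $M_0$ satisfies $\{xx^\top:x\in M_0\}=L$, which spans $V$. Hence on each connected component of $M^n$ (a product of components of $M$) one can choose greedily a config with $\{x_ix_i^\top\}$ a basis of $V$, so $p$ is not identically zero on that component, and by real-analyticity $\{p\ne0\}$ is open dense there; taking the union yields $O$ open dense in $M^n$. Thus $n^\circ(M)\le n^\star(M)$, and the reverse inequality is Lemma~\ref{lem.sample complexity}.

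For (b), when $d=2$ every component of $M$ is an open ray $\mathbb{R}_{>0}w$, along which $xx^\top$ stays on a single line; so whenever $x_1,\dots,x_n$ lie on a common ray, $\sum_i(S_{x_i}M)^\perp$ is one-dimensional and $\bigcap_i S_{x_i}M$ has dimension $d^2-1=3>2=\dim\mathfrak{sym}(M)$. As the set of such configurations is a nonempty open subset of $M^n$ for every $n$, no open dense subset of $M^n$ can achieve $\mathfrak{sym}(M)=\bigcap_i S_{x_i}M$, so $n^\circ(M)=\infty$, while $n^\star(M)=\binom 32-1=2$ by the displayed formula. I expect the main obstacle to be the density step of (c) in the disconnected case, together with the dual observation underlying (b)—that disconnectedness into one-dimensional pieces is exactly what sends $n^\circ$ to infinity—while constructing the explicit basis of $V$ from cone points is the remaining technical ingredient.
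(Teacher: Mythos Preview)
Your proposal is correct and tracks the paper's architecture closely (reduction via Lemma~\ref{lem.gl wlog}, computation of $N_xM$, $(S_xM)^\perp$, $\operatorname{Sym}(M)$, $\mathfrak{sym}(M)$, and the ray argument for~(b)), but it diverges from the paper in two technical steps. First, for $\operatorname{span}(L)=\operatorname{span}\{Q\}^\perp$ the paper does not assemble an explicit basis from cone points; instead it sets $\mathcal{A}(x,y)=[\begin{smallmatrix}xx^\top & xy^\top\\ yx^\top & yy^\top\end{smallmatrix}]\in L$ for $\|x\|=\|y\|\ne0$, extracts off-diagonal blocks via $\tfrac12(\mathcal{A}(x,y)-\mathcal{A}(x,-y))$, and handles block-diagonals $[\begin{smallmatrix}X&0\\0&Y\end{smallmatrix}]$ with $X,Y\succ0$ and $\operatorname{tr}X=\operatorname{tr}Y$ by a measure-partition trick that matches eigenvalue masses. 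Your explicit construction is more hands-on but equally valid; just be sure to spell out the subtraction of already-obtained mixed and diagonal terms when isolating $e_ie_{i'}^\top+e_{i'}e_i^\top$ from $(e_i+e_{i'}+\sqrt{2}e_j)(e_i+e_{i'}+\sqrt{2}e_j)^\top$. Second, for the density step in~(c) the paper normalizes to $\|x_i\|=\sqrt{2}$, identifies the resulting variety with $(S^{p-1}\times S^{q-1})^n$, and invokes the real-algebraic-geometry machinery of Proposition~\ref{prop.key ingredients from RAG}, splitting into the connected case $p\ne1\ne q$ and the $2^n$-component case $p=1$ or $q=1$. Your route---observe $M$ has at most two components swapped by $x\mapsto-x$, so $\{xx^\top:x\in M_0\}=L$ on each component $M_0$, then apply the identity theorem for real-analytic functions on each connected component of $M^n$---is more elementary, trading the algebraic-set apparatus for a standard analyticity fact; the paper's choice buys consistency with its treatment of Theorem~\ref{thm.quadric} and keeps everything within the framework of Proposition~\ref{prop.key ingredients from RAG}.
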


For the previous results in this section, we computed $n^\circ(M)$ by selecting a certain open and dense subset of $M^n$.
To accomplish this, we first identified an appropriate polynomial over $(\mathbb{R}^{d})^n$, and then we argued that the complement of the zero set of this polynomial has an open and dense intersection with $M^n$.
An analogous argument for Theorem~\ref{thm.zero} appears to require more powerful tools from real algebraic geometry.
A \textbf{real algebraic set} $V\subseteq\mathbb{R}^k$ is the simultaneous zero set of a finite collection of polynomials in $\mathbb{R}[x_1,\ldots,x_k]$, which in turn determines the ideal $I(V)\subseteq\mathbb{R}[x_1,\ldots,x_k]$ of all polynomials that vanish on $V$.
The \textbf{dimension} of a real algebraic set $V$ is the dimension of the ring $\mathbb{R}[x_1,\ldots,x_k]/I(V)$.
This notion of dimension can be difficult to compute directly.
Similarly, the definition of \textbf{nonsingular point} of a real algebraic set is particularly technical; see Definition~3.3.9 of~\cite{BochnakCR:13}.
To simplify our interaction with these notions, we factor our analysis through the following proposition, the proof of which is contained in Section~5 of~\cite{CahillMS:17}.

\begin{proposition}
\label{prop.key ingredients from RAG}
Let $V\subseteq\mathbb{R}^k$ denote the simultaneous zero set of real polynomials $\{p_i\}_{i\in[k-d]}$.
\begin{itemize}
\item[(a)]
If $V$ is a $C^\infty$ submanifold of $\mathbb{R}^k$, then its dimension as a real algebraic set equals its dimension as a manifold.
\item[(b)]
If $V$ has dimension $d$ as a real algebraic set and the Jacobian of $z\mapsto\{p_i(z)\}_{i\in[k-d]}$ has rank $k-d$ at $x\in V$, then $x$ is a nonsingular point of $V$.
\item[(c)]
If the nonsingular points in $V$ form a dense and path-connected subset, then for every real algebraic set $U\subseteq\mathbb{R}^k$, the relative complement $V\setminus U$ is either empty or both open and dense in $V$.
\end{itemize}
\end{proposition}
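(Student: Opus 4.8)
The plan is to obtain all three parts from the dimension theory of real algebraic sets in~\cite{BochnakCR:13}; only~(c) requires more than assembling standard facts, and there the hypotheses on the nonsingular locus are used to force $V$ to be irreducible.

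For~(a): the dimension of a real algebraic set $V$ --- the Krull dimension of $\mathbb{R}[x_1,\dots,x_k]/I(V)$ --- coincides with the topological dimension of $V$, hence with the dimension of $V$ as a $C^\infty$ manifold whenever $V$ happens to be one; this is among the foundational results of~\cite{BochnakCR:13}. For~(b): if the Jacobian of $z\mapsto\{p_i(z)\}_{i\in[k-d]}$ has rank $k-d$ at $x\in V$, then the implicit function theorem exhibits $V$ --- which by definition is globally the common zero set of the $p_i$ --- as a $C^\infty$ submanifold of dimension $k-(k-d)=d$ near $x$; since this matches $\dim V$, the point $x$ is nonsingular in the sense of Definition~3.3.9 of~\cite{BochnakCR:13} by the standard characterization of nonsingular points as those admitting a local submanifold structure of the expected dimension.

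For~(c), I would first show $V$ is irreducible. Write $V=\bigcup_{i\in[r]}W_i$ for the decomposition into irreducible components. A nonsingular point of $V$ has a regular local ring, which is in particular a domain, so it cannot lie on two distinct components; thus the nonsingular locus $V_{\mathrm{ns}}$ is contained in $\bigcup_i\bigl(W_i\setminus\bigcup_{j\neq i}W_j\bigr)$, and the sets $V_{\mathrm{ns}}\setminus\bigcup_{j\neq i}W_j$ partition $V_{\mathrm{ns}}$ into relatively open pieces. Since $V_{\mathrm{ns}}$ is path-connected, only one piece is nonempty, so $V_{\mathrm{ns}}\subseteq W_1$ after relabelling; and as $V_{\mathrm{ns}}$ is dense in $V$ while $W_1$ is closed, $V=\overline{V_{\mathrm{ns}}}\subseteq W_1$, forcing $V=W_1$. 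Now fix a real algebraic set $U$. If $V\subseteq U$ then $V\setminus U=\emptyset$. Otherwise $V\cap U$ is a proper real algebraic subset of the irreducible set $V$, so each of its irreducible components is properly contained in $V$ and hence has dimension strictly smaller than $\dim V$, giving $\dim(V\cap U)<\dim V$. Since algebraic and topological dimension agree, $V\cap U$ contains no nonempty open subset of $V$, i.e.\ it is nowhere dense in $V$; therefore $V\setminus U=V\setminus(V\cap U)$ is dense in $V$, and it is open in $V$ because $V\cap U$ is closed.

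The main obstacle is to isolate and correctly invoke the two facts about real algebraic sets that~(c) genuinely uses: that distinct irreducible components meet only at singular points, and that passing from an irreducible real algebraic set to a proper closed subvariety strictly lowers the dimension. Both are standard in~\cite{BochnakCR:13}; the remaining content of the proposition then reduces to the implicit function theorem and the identification of algebraic with topological dimension, which is essentially the route of Section~5 of~\cite{CahillMS:17}.
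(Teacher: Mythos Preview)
Your proposal is correct. Note that the paper does not actually supply its own proof of this proposition; it simply states that ``the proof of which is contained in Section~5 of~\cite{CahillMS:17},'' and you have in effect reconstructed that argument. Parts~(a) and~(b) are, as you say, standard consequences of the dimension theory and Jacobian criterion in~\cite{BochnakCR:13}, and your treatment of~(c) --- using density plus path-connectedness of the nonsingular locus to force irreducibility, then invoking the strict dimension drop for proper subvarieties of an irreducible set --- is exactly the natural route and matches what the cited reference does.
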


\begin{proof}[Proof of Theorem~\ref{thm.zero}]
First, (a) is immediate.
In what follows, we assume $Q$ is neither positive definite nor negative definite.
As in the proof of Theorem~\ref{thm.quadric}, we may assume without loss of generality that  $Q=\operatorname{diag}(\mathbf{1}_p,-\mathbf{1}_q)$ for some $p\in[d]$ and $q=d-p$.
Moreover, since $Q$ is neither positive nor negative definite, we have $p, q \in [d - 1]$.
Also, it holds that $N_xM = \operatorname{span}\{Qx\}$ for every $x \in M$.

Next, we verify that $L := \{xx^\top : x \in M\}$ satisfies $\operatorname{span}(L) = \operatorname{span}\{Q\}^\perp$.
The definition of $M$ immediately gives $\operatorname{span}(L) \subseteq \operatorname{span}\{Q\}^\perp$.
For $(\supseteq)$, consider $[\begin{smallmatrix}A&B\\C&D\end{smallmatrix}] \in \operatorname{span}\{Q\}^\perp$ with $A \in \mathbb{R}^{p \times p}$.
Then $C = B^\top$, $A \in \mathbb{R}^{p \times p}_{\mathrm{sym}}$, $D \in \mathbb{R}^{q \times q}_{\mathrm{sym}}$, and $\operatorname{tr}(A) = \operatorname{tr}(D)$.
For $x \in \mathbb{R}^p$ and $y \in \mathbb{R}^q$, we define
\[
\mathcal{A}(x,y) := [\begin{smallmatrix} xx^\top & xy^\top \\ yx^\top & yy^\top\end{smallmatrix}],
\]
and we observe that $\mathcal{A}(x,y) \in L$ precisely when $\|x\| = \|y\| \neq 0$.  It follows that $\operatorname{span}(L)$ contains $[\begin{smallmatrix} 0 & B \\ B^\top & 0 \end{smallmatrix}]$, since we can pass to the singular value decomposition of $B$ and apply the fact that for $\|x\| = \|y\| = 1$,
\[
[\begin{smallmatrix} 0 & xy^\top \\ yx^\top & 0 \end{smallmatrix}] = \frac{\mathcal{A}(x,y) - \mathcal{A}(x,-y)}{2} \in \operatorname{span}(L).
\]
It remains to show that $\operatorname{span}(L)$ contains $[\begin{smallmatrix} A & 0 \\ 0 & D \end{smallmatrix}]$.
To this end, it suffices to show that $[\begin{smallmatrix} X & 0 \\ 0 & Y \end{smallmatrix}]\in\operatorname{span}(L)$ for every $X,Y\succ0$ such that $\operatorname{tr}(X)=\operatorname{tr}(Y)$.
Indeed, we may express $[\begin{smallmatrix} A & 0 \\ 0 & D \end{smallmatrix}]$ as a difference of such matrices:
\[
[\begin{smallmatrix} A & 0 \\ 0 & D \end{smallmatrix}]
=\left[\begin{smallmatrix} A+\frac{c}{p} \operatorname{id}_p & 0 \\ 0 & D+\frac{c}{q} \operatorname{id}_q \end{smallmatrix}\right] - \left[\begin{smallmatrix} \frac{c}{p} \operatorname{id}_p & 0 \\ 0 & \frac{c}{q} \operatorname{id}_q \end{smallmatrix}\right],
\]
where $c>0$ is appropriately large.
Given eigenvalue decompositions $X = \sum_{i \in [p]} \lambda_i x_ix_i^\top$ and $Y = \sum_{j \in [q]} \mu_j y_jy_j^\top$, select measurable partitions
\[
I_1\sqcup\cdots\sqcup I_p
=[0,\operatorname{tr}(X)]
=[0,\operatorname{tr}(Y)]
=J_1\sqcup\cdots\sqcup J_q
\]
such that $|I_i|=\lambda_i$ and $|J_j|=\mu_j$. 
Then
\[
[\begin{smallmatrix} X & 0 \\ 0 & Y \end{smallmatrix}] 
= \sum_{i \in [p]}\sum_{j \in [q]} |I_i \cap J_j| \cdot [\begin{smallmatrix} x_ix_i^\top & 0 \\ 0 & y_jy_j^\top \end{smallmatrix}].
\]
Each term above resides in $\operatorname{span}(L)$ since for $\|x\| = \|y\| = 1$, it holds that
\[
[\begin{smallmatrix} xx^\top & 0 \\ 0 & yy^\top\end{smallmatrix}] = \frac{\mathcal{A}(x,y) + \mathcal{A}(x,-y)}{2} \in \operatorname{span}(L).
\]
As such, $[\begin{smallmatrix} X & 0 \\ 0 & Y \end{smallmatrix}]\in\operatorname{span}(L)$, as desired.

Next, we verify that $\operatorname{Sym}(M)=\{A\in\operatorname{GL}(d):A^\top QA=\alpha Q,~\alpha\neq0\}$.
For $(\subseteq)$, let $A \in \operatorname{Sym}(M)$.  Then
\[
\langle A^\top Q A, xx^\top \rangle
= (Ax)^\top Q Ax
= 0
= x^\top Q x
= \langle Q, xx^\top \rangle.
\]
Since $\operatorname{span}(L) = \operatorname{span}\{Q\}^\perp$, we have shown $A^\top Q A = \alpha Q$ for $\alpha \neq 0$.
For $(\supseteq)$, suppose $A \in \operatorname{GL}(d)$ satisfies $A^\top Q A = \alpha Q$ for some $\alpha \neq 0$.  Then for every $x \in \mathbb{R}^d$,
\[
(Ax)^\top Q (Ax) = x^\top A^\top Q A x = \alpha x^\top Q x,
\]
and so $AM = M$.  As such, $A \in \operatorname{Sym}(M)$ as desired.

We claim that the corresponding Lie algebra is given by
\begin{align*}
\mathfrak{sym}(M)
&=\{Z\in\mathbb{R}^{d\times d}:Z^\top=-QZQ + 2\beta \operatorname{id},~\beta \in \mathbb{R}\}\\
&=\{[\begin{smallmatrix}
  A_0 & B\\
  B^\top & D_0
\end{smallmatrix}] + \beta \operatorname{id} :A_0\in\mathbb{R}_\mathrm{antisym}^{p\times p},~D_0\in\mathbb{R}_\mathrm{antisym}^{q\times q},~\beta\in\mathbb{R}\}.
\end{align*}
Select $Z\in\mathfrak{sym}(M)$ so that there exists a differentiable $f\colon I\to\operatorname{Sym}(M)$ such that $f(0)=\operatorname{id}$ and $f'(0)=Z$.
Define $\alpha \colon I \rightarrow \mathbb{R} \setminus \{0\}$ by
\[
\alpha(t) := \frac{\operatorname{tr}(f(t)^\top Q f(t) Q)}{\|Q\|_F^2}.
\]
Differentiating the identity $f(t)^\top Qf(t)=\alpha(t)Q$ gives
\[
f'(t)^\top Qf(t)+f(t)^\top Qf'(t)=\alpha'(0).
\]
Evaluating at $t=0$ then gives $Z^\top Q+QZ=\alpha'(0)Q$, meaning $Z^\top=-QZQ + \alpha'(0)\operatorname{id}$.
Writing $Z=[\begin{smallmatrix}A&B\\C&D\end{smallmatrix}]$ then reveals that
\[
[\begin{smallmatrix}A^\top&C^\top\\B^\top&D^\top\end{smallmatrix}]
=Z^\top
=-QZQ + \alpha'(0)\operatorname{id}
=[\begin{smallmatrix}\alpha'(0)\operatorname{id}_p-A&B\\C&\alpha'(0)\operatorname{id}_q-D\end{smallmatrix}],
\]
from which it follows that $C = B^T$, $A + A^T = \alpha'(0) \operatorname{id}_p$, and $D + D^T = \alpha'(0) \operatorname{id}_q$.  
Setting $\beta = \alpha'(0)/2$, we see that $Z=[\begin{smallmatrix}A_0 &B\\B^\top&D_0\end{smallmatrix}] + \beta \operatorname{id}$ for $A_0\in\mathbb{R}_\mathrm{antisym}^{p\times p}$, $D_0\in\mathbb{R}_\mathrm{antisym}^{q\times q}$, and $\beta \in \mathbb{R}$.
Furthermore, any such matrix satisfies
\[
([\begin{smallmatrix}A_0 &B\\B^\top&D_0\end{smallmatrix}] + \beta \operatorname{id})^\top
=[\begin{smallmatrix}A_0^\top&B\\B^\top&D_0^\top\end{smallmatrix}] + \beta\operatorname{id}
=[\begin{smallmatrix}-A_0&B\\B^\top&-D_0\end{smallmatrix}] + \beta\operatorname{id}
=-Q([\begin{smallmatrix}A_0&B\\B^\top&D_0\end{smallmatrix}] + \beta\operatorname{id})Q + 2\beta\operatorname{id}.
\]
Fixing $\beta \in \mathbb{R}$, it remains to show that every $Z\in\mathbb{R}^{d\times d}$ satisfying $Z^\top=-QZQ + 2\beta\operatorname{id}$ necessarily resides in $\mathfrak{sym}(M)$.
To this end, define $f\colon\mathbb{R}\to\operatorname{GL}(d)$ by $f(t)=e^{tZ}$.
Then $f(0)=\operatorname{id}$ and $f'(0)=Z$.
It suffices to verify that $f(t)^\top Qf(t)=\alpha Q$ for some $\alpha \neq 0$, since this would imply $f\colon\mathbb{R}\to\operatorname{Sym}(M)$, meaning $Z\in\mathfrak{sym}(M)$.
Since $Q^{-1}=Q$, we have
\begin{align*}
f(t)^\top Qf(t)
=e^{tZ^\top}Qe^{tZ}
&=e^{-tQZQ + 2\beta\operatorname{id}}Qe^{tZ}\\
&=e^{2\beta} e^{-tQZQ^{-1}}Qe^{tZ}
=e^{2\beta} Qe^{-tZ}Q^{-1}Qe^{tZ}
=e^{2\beta} Q.
\end{align*}

At this point, we may compute
\[
n^\star(M)
=\frac{\operatorname{codim}\mathfrak{sym}(M)}{\operatorname{codim}M}
=\frac{d^2-(\binom{d}{2} + 1)}{d-(d-1)}
=\binom{d+1}{2} - 1.
\]
It remains to compute $n^\circ(M)$.
For this, (b) and (c) require different approaches.

For (b), we take $d = 2$.
Express $M = M_+ \sqcup M_-$, where
\[
M_{\pm} := \{x \in \mathbb{R}^2 \setminus \{0\} : x_1 = \pm x_2\}.
\]
Fix $n \in \mathbb{N}$ and select $O \in \mathcal{O}(M^n)$.
Observe that $(1,1) \in M_+ \subseteq M$, and therefore $\{(1,1)\}_{i \in [n]} \in M^n$.
Since $O$ is dense in $M^n$, there exists $\{x_i\}_{i \in [n]} \in O$ arbitrarily close to $\{(1,1)\}_{i \in [n]}$; i.e., $x_i \in M_+$ for all $i$.
Since $T_{x_i} M = T_{x_i} M_+$ for every $i \in [n]$, it holds that
\[
\bigcap_{i \in [n]} S_{x_i} M = \bigcap_{i \in [n]} S_{x_i} M_+ \supseteq \mathfrak{sym}(M_+),
\]
where the last step follows from Lemma~\ref{lem.sample complexity}.  
As in the proof of Theorem~\ref{thm.subspace}, $\operatorname{codim} \mathfrak{sym}(M_+) = 1$.  
On the other hand, $\operatorname{codim} \mathfrak{sym}(M) = 2$, and so $\bigcap_{i \in [n]} S_{x_i} M \neq \mathfrak{sym}(M)$.  
All together, we conclude $n^\circ(M) = \infty$.

For (c), we consider $d \geq 3$.  Lemma~\ref{lem.normal space}(b) gives
\[
(S_xM)^\perp
=\{zx^\top:z\in N_xM\}
=\{zx^\top:z\in \operatorname{span}\{Qx\}\}
=\operatorname{span}\{Qxx^\top\}.
\]
Put $n:=\binom{d+1}{2} - 1$.
Considering Lemma~\ref{lem.sample complexity}, we want to find $\{x_i\}_{i\in[n]}$ in $M$ for which the subspaces $\{\operatorname{span}\{Qx_ix_i^\top\}\}_{i\in[n]}$ are linearly independent.
This occurs precisely when $\{Qx_ix_i^\top\}_{i\in[n]}$ are linearly independent, which in turn occurs precisely when $\{x_ix_i^\top\}_{i\in[n]}$ are linearly independent.
Overall, we wish to find an open and dense subset $O$ of $M^n$ such that for every $\{x_i\}_{i\in[n]}\in O$, it holds that $\{x_ix_i^\top\}_{i\in[n]}$ are linearly independent.

As in Theorem~\ref{thm.quadric}, let $O'$ denote the set of $\{x_i\}_{i\in[n]}\in (\mathbb{R}^d)^n$ such that $\{x_i x_i^\top\}_{i\in[n]}$ is linearly independent; recall that $O'$ is open and dense in $(\mathbb{R}^d)^n$.  Then $O := O' \cap M^n$ is open in $M^n$, and it remains to show that $O$ is dense in $M^n$.  By scaling, it suffices to show that
\[
Z = \{(x_1, \ldots, x_n) \in O : \|x_i\| = \sqrt{2} \text{ for all } i \in [n]\}
\]
is dense in 
\[
V = \{(x_1, \ldots, x_n) \in M^n : \|x_i\| = \sqrt{2} \text{ for all } i \in [n]\}.
\]
Considering $Z$ takes the form $Z=V\setminus U$ for some real algebraic set $U$, we are in a position to apply Proposition~\ref{prop.key ingredients from RAG}(c).
We proceed in two cases.

\medskip

\textbf{Case I.}
$p \neq 1 \neq q$.
By Proposition~\ref{prop.key ingredients from RAG}(c), it suffices to show that
\begin{itemize}
\item[(i)] $Z$ is nonempty,
\item[(ii)] every point in $V$ is nonsingular, and
\item[(iii)] $V$ is connected.
\end{itemize}
For (i), recall that $\operatorname{span}(L) = \operatorname{span}\{Q\}^\perp$ and choose an appropriately scaled basis $\{x_ix_i^\top\}_{i \in [n]}$ for $\operatorname{span}(L)$ so that $\{x_i\}_{i\in[n]} \in Z$.
Next, we demonstrate (ii).
We may identify $V$ with $(S^{p - 1} \times S^{q - 1})^n$, meaning $V$ is a $C^\infty$ submanifold of $\mathbb{R}^{dn}$, and so Proposition~\ref{prop.key ingredients from RAG}(a) gives that its dimension as a real algebraic set equals $dn-2n$.
For each $x_i \in \mathbb{R}^d$, write $x_i = (y_i,z_i)$ with $y_i \in \mathbb{R}^p$ and $z_i \in \mathbb{R}^q$, and define the polynomials
\[
P_i(x_1, \ldots, x_n) = \|y_i\|^2 - 1,
\qquad
Q_i(x_1, \ldots, x_n) = \|z_i\|^2 - 1.
\]
Observe that $V$ is the simultaneous zero set of $\{P_i,Q_i\}_{i\in[n]}$.
For $\{x_i\}_{i\in[n]} \in V$, the Jacobian
\[
J(x_1, \ldots, x_n) := \operatorname{blockdiag}(2y_1^\top, 2z_1^\top, \ldots, 2y_n^\top, 2z_n^\top)
\]
satisfies $JJ^\top = 4\operatorname{id}$, and so $J$ has rank $2n$.
Proposition~\ref{prop.key ingredients from RAG}(b) then gives (ii).  
Finally, the identification $V=(S^{p - 1} \times S^{q - 1})^n$ implies (iii), as desired.

\medskip

\textbf{Case II.}
Either $p = 1$ or $q = 1$.
We may identify $V$ with $(\{\pm 1\} \times S^{d - 2})^n$, and so we may express $V$ as the union of $2^n$  connected components
\[
V
= \bigsqcup_{\epsilon \in \{\pm1\}^n} V_\epsilon, 
\qquad 
V_\epsilon 
:= \{([\begin{smallmatrix} \epsilon_1 \\  z_1 \end{smallmatrix}], \ldots, [\begin{smallmatrix} \epsilon_n \\ z_n \end{smallmatrix}]) : z_1, \ldots, z_n \in S^{d - 2}\}.
\]
We will apply Proposition~\ref{prop.key ingredients from RAG}(c) to show that, for each $\epsilon \in \{\pm1\}^n$, the intersection $V_\epsilon \cap Z$ is dense in $V_\epsilon$.
By the argument in Case~I, $Z$ is nonempty, and so we may select $\{x_i\}_{i\in[n]} \in Z$.
Since $Z \subseteq V$, there exists $\epsilon \in \{\pm 1\}^n$ such that $\{x_i\}_{i\in[n]} \in V_{\epsilon}$.
Then for each $\epsilon' \in \{\pm 1\}^n$, it holds that $\{\epsilon'_i \epsilon_i x_i\}_{i\in[n]} \in V_{\epsilon'} \cap Z$.
As such, $V_{\epsilon} \cap Z$ is nonempty for every $\epsilon \in \{\pm 1\}^n$.
Moreover, $V_\epsilon$ is connected and, by the argument in Case~I, every point in $V_\epsilon$ is nonsingular.
All together, we conclude that each $V_\epsilon \cap Z$ is dense in $V_\epsilon$, and so $Z$ is dense in $V$, as desired.
\end{proof}

\section{Application to density estimation}

In this section, we apply Lie PCA to perform density estimation in various settings.
For each experiment, we consider a manifold with a nontrivial Lie group.
We draw points $\{x_i\}_{i\in[n]}$ in $\mathbb{R}^d$ according to some distribution supported on that manifold.
The density estimation algorithm then uses these points to produce $N\gg n$ draws $\{y_s\}_{s\in[N]}$ from an estimate of the underlying distribution.
For these experiments, we grant access to the dimension $r$ of the manifold and the dimension $\ell$ of the Lie algebra so as to isolate the performance of each algorithm from the task of learning hyperparameters.

Our algorithm first runs local PCA on $k$ nearest neighbors from each of the data points to produce estimates $\{T_i\}_{i\in[n]}$ of the tangent spaces at each of the sample points.
We then run Algorithm~\ref{alg.LiePCA} to obtain an estimate $\operatorname{span}\{v_j\}_{j\in[\ell]}$ of $\mathfrak{sym}(M)$.
Then for each $s\in[N]$, we draw $t\sim\mathsf{Unif}([n])$ and a random $A$ from $\operatorname{span}\{v_j\}_{j\in[\ell]}$ with spherical Gaussian distribution, and then we put $y_s:=e^Ax_t$.
If $y_s$ is too far away from $\{x_i\}_{i\in[n]}$, then we replace $y_s$ with another draw from this random process, repeating as necessary.

We compare our approach to a few alternatives.
One baseline is to simply draw each $y_s$ uniformly from $\{x_i\}_{i\in[n]}$.
We denote this by \textbf{BL1}.
Another baseline, which we call \textbf{BL2}, is to draw each $y_s$ from the unknown distribution.
While this is not a plausible alternative, it indicates how well an algorithm can possibly perform.
We also consider a standard approach known as kernel density estimation, which we denote by \textbf{KDE}.
For this approach, we draw $t\sim\mathsf{Unif}([n])$ and a random Gaussian vector $g$ with covariance determined by Silverman's rule of thumb~\cite{Silverman:86}, and then put $y_s:=x_t+g$.
This sort of estimate is specifically designed for the regime in which $n$ is large, where the covariance decays gracefully to zero.
(We will find that $n\ll100$ is not large enough for this method to perform well.)
Finally, we consider local PCA, which we denote by \textbf{LPCA}, in which the same estimates $\{T_i\}_{i\in[n]}$ obtained for the Lie PCA approach are used.
Here, we draw $t\sim\mathsf{Unif}([n])$ and a random Gaussian vector $g$ in $T_t$, and then put $y_s:=x_t+g$.
If $y_s$ is too far away from $\{x_i\}_{i\in[n]}$, then we replace $y_s$ with another draw from this random process, repeating as necessary.

We consider two different metrics for measuring the performance of these algorithms.
Both metrics compare $\{y_s\}_{s\in[N]}$ to a fresh draw $\{z_s\}_{s\in[N]}$ from the underlying distribution.
Taking inspiration from~\cite{ArjovskyCB:17}, our first metric takes the (normalized) earth mover's distance between $\{y_s\}_{s\in[N]}$ and $\{z_s\}_{s\in[N]}$.
Conveniently, this distance can be obtained by linear programming.
Indeed, defining $D\in\mathbb{R}^{N\times N}$ by $D_{st}:=\|y_s-z_t\|_2$ gives
\[
\operatorname{nEMD}(\{y_s\}_{s\in[N]},\{z_s\}_{s\in[N]})
=\min\{\tfrac{1}{N}\operatorname{tr}(DX):X\in\mathbb{R}^{N\times N},X1=X^\top1=1,X\geq0\}.
\]
Intuitively, the normalized earth mover's distance captures the average distance traveled per point by optimal transport from $\{y_s\}_{s\in[N]}$ to $\{z_s\}_{s\in[N]}$.
We found that for $N=300$, this distance can computed in CVX~\cite{GrantB:online} in about 15 seconds.
We also wanted a metric that is determined by the underlying supports of the densities rather than being sensitive to fluctuations in the densities.
This led us to also consider the Hausdorff distance between $\{y_s\}_{s\in[N]}$ and $\{z_s\}_{s\in[N]}$, which is much faster to compute:
\[
\operatorname{Hausdorff}(\{y_s\}_{s\in[N]},\{z_s\}_{s\in[N]})
=\max\Big(\max_{s\in[N]}\min_{t\in[N]}\|x_s-y_t\|_2,
\max_{t\in[N]}\min_{s\in[N]}\|x_s-y_t\|_2
\Big).
\]
In words, if we identify the closest member of $\{y_s\}_{s\in[N]}$ to each point in $\{z_s\}_{s\in[N]}$, and vice versa, then the Hausdorff distance reports the largest of these distances.

\begin{table}[t]
\caption{Density estimation error in normalized earth mover's and Hausdorff distances\label{table.table}}
\begin{center}
\begin{tabular}{p{3cm}|p{2cm}p{2cm}p{2cm}p{2cm}|p{2cm}}\hline
Manifold & BL1 & KDE & LPCA & Lie PCA & BL2 \\\hline\hline
line 
& \textbf{0.3901}   &  0.7391 &   0.4902  &  0.4185 & 0.1543 \\
& 1.0886   & 2.1822  &  \textbf{0.5558}  &  0.5838 & 0.3645 \\\hline
ellipse 
&0.4782   & 0.8620   & 0.5346  &  \textbf{0.4686} & 0.1909\\
& 0.8925  &  2.2269  &  1.3633  & \textbf{0.1569} & 0.1390 \\\hline
hyperbola 
&0.3322  &  0.5645  &  0.4337  &  \textbf{0.2750} & 0.1444 \\
&3.0194  &  2.4414  &  2.5090  &  \textbf{2.3396} & 0.4246 \\\hline
ellipse + noise
&0.4030    &0.4959    &0.3580    &\textbf{0.3057}    &0.3365\\
&0.7951    &1.8170    &1.0371    &\textbf{0.5512}    &0.6492\\\hline
torus
&1.0040    &1.0465    &0.9549    &\textbf{0.7022}    &0.5469\\
&\textbf{1.5197}    &3.2996    &2.5790    &1.7882    &0.9122\\\hline
\end{tabular}
\end{center}
\end{table}

\begin{figure}
\begin{center}
\includegraphics[width=\textwidth,trim={10 100 0 80},clip]{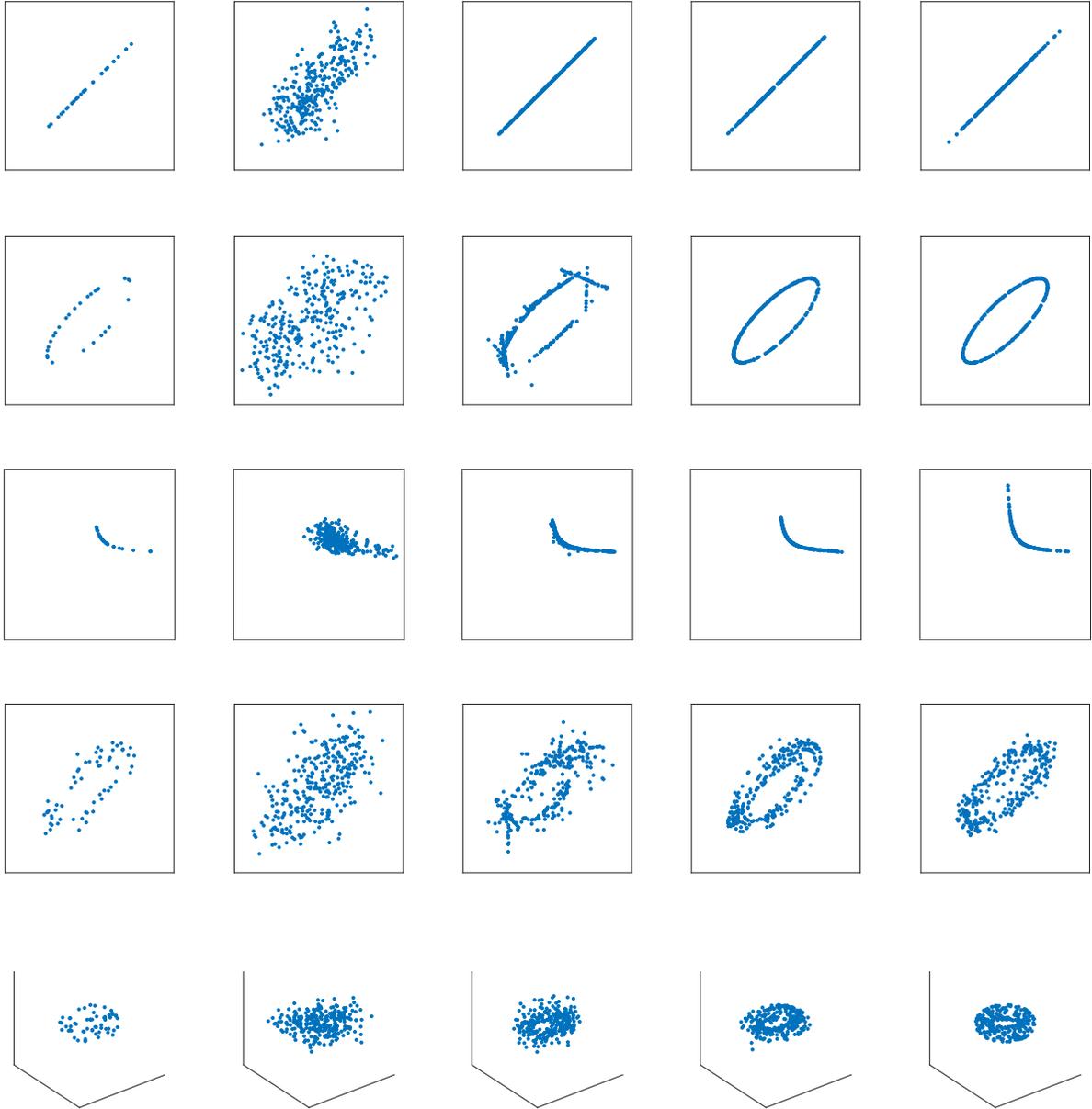}
\end{center}
\caption{\label{fig.comparison_2}
Illustration of results from Table~\ref{table.table}.
Given $n$ independent draws from an unknown distribution in $\mathbb{R}^d$, simulate an additional $N\gg n$ draws.
\textbf{(left)}
Given data points.
The origin of $\mathbb{R}^d$ is located at the center of the display box.
\textbf{(middle left)}
Simulated draws using kernel density estimation with Silverman's rule of thumb.
\textbf{(middle)}
Simulated draws using local PCA.
\textbf{(middle right)}
Simulated draws using Lie PCA.
\textbf{(right)}
Fresh draws from the true distribution.
}
\end{figure}

We considered densities on three different manifolds in $\mathbb{R}^2$.
For these instances, we take $n=30$, $N=300$, $k=2$, $r=1$, and $\ell=1$.
We then considered a noisy version of a manifold-supported density in $\mathbb{R}^2$, in which we take $n=60$, $N=300$, $k=10$, $r=1$, and $\ell=1$.
Finally, we considered a density on a manifold in $\mathbb{R}^3$, in which we take $n=60$, $N=300$, $k=20$, $r=2$, and $\ell=1$.
Results are reported in Table~\ref{table.table} and illustrated in Figure~\ref{fig.comparison_2}.

\section{Discussion}

This paper introduced a spectral method that uses the output from local PCA to estimate the Lie algebra corresponding to the symmetry group of the underlying manifold.
In this section, we point out a few opportunities for future work.
First, recall that our spectral method arises from relaxing the set of $\ell$-dimensional Lie algebras to the Grassmannian.
It would be interesting to somehow round the solution of the spectral method to a nearby Lie algebra.
Next, our sample complexity results came from focusing on specific families of manifolds.
It would be nice to have more general results in this vein.
For example, can we characterize the manifolds $M$ for which $n^\circ(M)=n^\star(M)$?
When applying Lie PCA to the density estimation problem, we would prefer a principled approach for drawing $A$ from our estimate of $\mathfrak{sym}(M)$; we currently apply an ad hoc adaptation of Silverman's rule of thumb.
Finally, the performance of Lie PCA for density estimation appears to depend on whether the symmetry group acts transitively on the manifold.
For example, the torus partitions into circular orbits under the action of its symmetry group.
For this manifold, Lie PCA will encourage motion along these circles without regard for the other dimension of the manifold.
However, local PCA captures some information about this other dimension, and it would be interesting to somehow incorporate this into the density estimation algorithm.

\section*{Acknowledgments}

DGM was partially supported by AFOSR FA9550-18-1-0107 and
NSF DMS 1829955.
HP was partially supported by an AMS-Simons Travel Grant.


\begin{thebibliography}{WW}

\bibitem{ArjovskyCB:17}
M.\ Arjovsky, S.\ Chintala, L.\ Bottou,
Wasserstein Generative Adversarial Networks,
ICML 2017, 214--223.

\bibitem{BochnakCR:13}
J.\ Bochnak, M.\ Coste, M.\ F.\ Roy,
Real Algebraic Geometry,
Vol.\ 36, Springer Science \& Business Media, 2013.

\bibitem{BrunaM:13}
J.\ Bruna, S.\ Mallat,
Invariant scattering convolution networks,
IEEE Trans.\ Pattern Anal.\ Mach.\ Intell.\ 35 (2013) 1872--1886.

\bibitem{CahillCC:19}
J.\ Cahill, A.\ Contreras, A.\ Contreras-Hip,
Classifying Signals Under a Finite Abelian Group Action:\ The Finite Dimensional Setting,
arXiv:1911.05862

\bibitem{CahillCC:20}
J.\ Cahill, A.\ Contreras, A.\ Contreras-Hip,
Complete set of translation invariant measurements with Lipschitz bounds,
Appl.\ Comput.\ Harmon.\ Anal.\ 49 (2020) 521--539.

\bibitem{CahillMS:17}
J.\ Cahill, D.\ G.\ Mixon, N.\ Strawn,
Connectivity and irreducibility of algebraic varieties of finite unit norm tight frames,
SIAM J.\ Appl.\ Algebra Geometry 1 (2017) 38--72.

\bibitem{ClumMS:20}
C.\ Clum, D.\ G.\ Mixon, T.\ Scarnati,
Matching Component Analysis for Transfer Learning,
SIAM J.\ Math.\ Data Sci.\ 2 (2020) 309--334.

\bibitem{DumitrascuVME:19}
B.\ Dumitrascu, S.\ Villar, D.\ G.\ Mixon, B.\ E.\ Engelhardt,
Optimal marker gene selection for cell type discrimination in single cell analyses,
BioRxiv (2019) 599654.

\bibitem{GrantB:online}
M.\ Grant, S.\ Boyd,
CVX:\ Matlab software for disciplined convex programming,
\url{http://cvxr.com/cvx}

\bibitem{Hall:15}
B.\ Hall,
Lie groups, Lie algebras, and representations:\ An elementary introduction,
Vol.\ 222. Springer, 2015.

\bibitem{KrizhevskySH:12}
A.\ Krizhevsky, I.\ Sutskever, G.\ E.\ Hinton,
Imagenet classification with deep convolutional neural networks,
NIPS 2012, 1097--1105.

\bibitem{Mallat:12}
S.\ Mallat,
Group invariant scattering,
Comm.\ Pure Appl.\ Math.\ 65 (2012) 1331--1398.

\bibitem{McWhirterMV:20}
C.\ McWhirter, D.\ G.\ Mixon, S.\ Villar,
Squeezefit:\ Label-aware dimensionality reduction by semidefinite programming,
IEEE Trans.\ Inform.\ Theory 66 (2020) 3878--3892.

\bibitem{Silverman:86}
B.\ W.\ Silverman, 
Density Estimation for Statistics and Data Analysis,
Chapman \& Hall/CRC, 1986.

\bibitem{SimardSP:03}
P.\ Y.\ Simard, D.\ Steinkraus, J.\ C.\ Platt,
Best Practices for Convolutional Neural Networks Applied to Visual Document Analysis,
ICDAR 2003, 958.

\bibitem{SzegedyEtal:15}
C.\ Szegedy, W.\ Liu, Y.\ Jia, P.\ Sermanet, S.\ Reed, D.\ Anguelov, D.\ Erhan, V.\ Vanhoucke, A.\ Rabinovich,
Going deeper with convolutions,
CVPR 2015, 1--9.

\end{thebibliography}
\end{document}